\documentclass{article}

\usepackage[preprint]{neurips_2026}

\usepackage[utf8]{inputenc} 
\usepackage[T1]{fontenc}    
\usepackage{hyperref}       
\usepackage{url}            
\usepackage{booktabs}       
\usepackage{amsfonts}       
\usepackage{nicefrac}       
\usepackage{microtype}      
\usepackage[table]{xcolor}  

\usepackage{graphicx}
\usepackage{amsmath}
\definecolor{rowblue}{RGB}{235, 240, 255}

\usepackage{amsthm}
\usepackage{amssymb}      
\usepackage{longtable}    
\usepackage{cleveref}     
\usepackage{multirow}
\usepackage{caption}
\usepackage{wrapfig}

\newtheorem{theorem}{Theorem}
\newtheorem{lemma}[theorem]{Lemma}
\newtheorem{corollary}[theorem]{Corollary}
\newtheorem{assumption}{Assumption}
\newtheorem{definition}{Definition}
\newtheorem{remark}{Remark}

\title{Matryoshka Concept Bottleneck Models}

\author{%
\begin{tabular}{c}
\textbf{Ziye Chen}$^{*,1,2}$\quad
\textbf{Hongbin Lin}$^{*,2}$\quad
\textbf{Jie Li}$^{1}$\quad
\textbf{Lijie Hu}$^{1}$ \\
\normalfont $^{1}$Mohamed bin Zayed University of Artificial Intelligence \\
\normalfont $^{2}$The Hong Kong University of Science and Technology (Guangzhou) \\
\normalfont $^{3}$The Hong Kong University of Science and Technology
\end{tabular}
}

\makeatletter
\renewcommand{\@noticestring}{%
  Preprint. $^*$Equal contribution. Correspondence to: Lijie Hu \texttt{<lijie.hu@mbzuai.ac.ae>}%
}
\makeatother

\begin{document}

\maketitle

\begin{abstract}
Concept Bottleneck Models (CBMs) have emerged as a prominent paradigm for interpretable deep learning, learning by grounding predictions in human-understandable concepts. However, their practical deployment is hindered by the high cost of test-time intervention, as correcting model errors typically requires human experts to manually inspect and verify a large set of predicted concepts. 
Existing approaches suffer from a fundamental structural limitation: they either adopt a single static concept set, forcing experts to exhaustively annotate concepts and incurring prohibitive intervention costs, or train multiple models tailored to different concept budgets, resulting in substantial computational and maintenance overhead.
To address this challenge, we propose the \textbf{Matryoshka Concept Bottleneck Model (MCBM)}, a unified architecture that enables adaptive concept utilization within a single model. Inspired by Matryoshka Representation Learning, MCBM organizes concepts into a nested hierarchy based on maximum relevance and minimum redundancy, allowing inference at multiple levels of conceptual granularity without retraining. 
Theoretically, we show that MCBM reduces the expected intervention costs from linear to logarithmic order, $O(\log K)$, while guaranteeing monotonic performance improvement. 
Empirically, extensive experiments demonstrate that MCBM matches the performance of independently trained models while enabling dynamic and efficient expert interaction.
\end{abstract}

\section{Introduction}
\label{sec:intro}

As deep learning models are increasingly deployed in high-stakes domains such as healthcare and finance, transparency and interpretability have become central concerns~\cite{lin2026controllable, hueditable}. Concept Bottleneck Models (CBMs)~\cite{koh2020concept} address this challenge by decomposing predictions into two interpretable stages: mapping inputs to human-understandable concepts and making predictions solely based on these concepts.
Beyond providing explanations, CBMs enable human experts to intervene by inspecting and correcting erroneous concept predictions, motivating growing interest in effective and interactive CBM design~\cite{ECBM, hu2024towards,xu2025concept}.

However, the practical utility of CBMs is hindered by the high cost of test-time intervention. Existing CBM approaches treat concepts as a flat and unordered set, requiring experts to manually inspect and correct many concepts to fix a single prediction~\cite{Chauhan2022}, rendering large-scale deployment impractical. While reducing the number of concepts can lower this burden, it often leads to a substantial drop in predictive performance~\cite{havasi2022addressing, hu2025stable, hu2025semi, cheng2025compke}. As a result, practitioners face a rigid design choice: either deploy a single high-dimensional CBM that is costly to intervene on, or train and maintain multiple models with different concept budgets, incurring significant computational overhead and lacking flexibility for real-world deployment~\cite{zhang2025locate, lai2024faithful}.

This rigidity arises from how concepts are structured and utilized in existing CBMs. In fine-grained tasks, many concepts are redundant, yet standard CBM approaches treat them as independent and equally important, leading to inefficient intervention. A natural solution is to exploit the information-theoretic structure of concept sets. Organizing concepts into a nested hierarchy based on \textbf{Maximum Relevance and Minimum Redundancy (mRMR)} concentrates discriminative information in early concept dimensions. This organization aligns with Matryoshka Representation Learning, where representations are ordered by decreasing importance within a single model, offering a principled route to efficient and adaptive intervention.

To operationalize this insight, we introduce the \textbf{Matryoshka Concept Bottleneck Model (MCBM)}. As illustrated in Figure~\ref{fig:mcbm_architecture}, MCBM is a unified architecture designed for flexible and cost-efficient deployment. Unlike existing methods that require training separate models for different concept budgets, our framework is trained once and supports adaptive inference across a wide range of intervention constraints. Specifically, we employ the mRMR criterion to construct a hierarchical ordering of concepts, ensuring that the most informative and least redundant concepts are prioritized. The model is then optimized using a nested objective function that enforces predictive accuracy at multiple levels of conceptual granularity simultaneously. Consequently, MCBM functions as a plug-and-play module, allowing practitioners to dynamically trade off annotation cost and predictive performance without retraining.


Our main contributions are summarized as follows:
\begin{itemize}
    \item \textbf{Methodology:} We introduce MCBM, a unified architecture that integrates mRMR-based concept ordering with Matryoshka Representation Learning, enabling elastic deployment and efficient intervention from a single training run.
    \item \textbf{Theoretical Analysis:} We show that the proposed nested, importance-based structure reduces the expected intervention cost from linear $O(K)$ to logarithmic $O(\log K)$ under standard information decay assumptions.
    \item \textbf{Experiments:} We empirically validate MCBM on the CUB, LAD, and CelebA datasets, demonstrating performance comparable to independently trained baselines with substantially reduced intervention effort.
\end{itemize}

\begin{figure*}[h]
    \centering
    \includegraphics[width=1\linewidth]{img/MCBM_pipeline.pdf}
    \caption{\textbf{Matryoshka Concept Bottleneck Models Architecture.} The input image is encoded into raw logits, which are then permuted based on the pre-computed mRMR ranking. This yields an ordered concept vector where information density is concentrated at the beginning. Multiple parallel heads (Matryoshka Heads) then perform classification using nested prefixes of this ordered vector.}
    \label{fig:mcbm_architecture}
\vspace{-7pt}
\end{figure*}

\section{Related Work}
\label{sec:related_work}

\paragraph{Concept Bottleneck Models and the Rigid Cost Paradigm.}
Concept Bottleneck Models (CBM)~\cite{koh2020concept} have established themselves as a cornerstone of interpretable deep learning, particularly in high-stakes domains like healthcare~\cite{ahmad2018interpretable,yu2018artificial} and dermatology~\cite{daneshjou2021dermatologist}, where distinguishing specific attributes is critical for trust. The core appeal of CBM lies in their ability to support test-time intervention~\cite{shin2023closer}, allowing human experts to rectify mispredictions.
Recent work improves CBM along orthogonal axes such as robustness, label-efficiency, faithfulness, and concept-set completeness~\cite{havasi2022addressing, oikarinen2023labelfree, yuksekgonul2022post, zarlenga2023learning, lai2024faithful, bhan2025complete}. 
However, despite these advances in model quality, existing approaches fundamentally neglect the \emph{inference efficiency} of the bottleneck itself. Whether supervised or label-free~\cite{oikarinen2023labelfree, yang2023language}, these models structure concepts as a \emph{flat, unordered bottleneck}, imposing a rigid $O(K)$ verification cost on the user regardless of the input's difficulty. Our Matryoshka CBM (MCBM) addresses this overlooked dimension by introducing a nested hierarchy, enabling efficient, logarithmic-cost interventions without sacrificing the robustness or faithfulness gains of prior arts.

\paragraph{Sparse, Hybrid, and Adaptive CBM.}
To mitigate the computational and cognitive overhead of CBM, a second line of work introduces sparsity or hybrid mechanisms. \emph{Hybrid CBM} combine concept bottlenecks with standard black-box branches to recover performance~\cite{antognini2021concept, havasi2022addressing}, but often at the cost of full interpretability. \emph{Sparse CBM} utilize regularization techniques (e.g., $L_1$ penalties, Gumbel-Softmax) to dynamically select a small subset of active concepts per input~\cite{semenov2024sparse, wong2021leveraging}. Similarly, \emph{Adaptive CBM} attempt to adjust the bottleneck capacity to handle distribution shifts~\cite{chowdhury2024adacbm}. 
However, these methods typically result in \emph{unstructured sparsity} where the model activates a disparate set of concepts for each individual sample. This lack of a globally consistent ordering prevents systematic resource truncation (e.g., for deployment on edge devices with fixed memory budgets). Furthermore, they lack a theoretical guarantee for intervention efficiency. In contrast, MCBM enforces a \emph{globally consistent, importance-based ordering} via mRMR, enabling predictable plug-and-play deployment and theoretically bounded intervention costs.

\paragraph{Causal and Stochastic CBM.}
Causal and stochastic CBMs model dependencies among concepts to improve intervention propagation and uncertainty-aware reasoning~\cite{vandenhirtz2024stochastic,defelice2026causal}. This direction is complementary to ours. Such models answer how a correction should propagate through a concept graph, while MCBM answers which concepts should be queried first under a strict human or hardware budget~\cite{rafferty2026radiologist,fokkema2026sample,kalampalikis2025towards}. Even when a causal graph identifies task-relevant parents, it does not by itself solve the redundancy-aware subset-ordering problem; highly correlated parents can waste a small intervention budget. Moreover, graph-based propagation usually requires the full concept graph and full vocabulary to remain active at inference time. MCBM instead bakes a static mRMR ordering into the architecture, allowing unused suffix concepts and their heads to be physically truncated for predictable low-resource deployment.

\paragraph{Matryoshka Representation Learning and Feature Selection.}
Beyond CBM, \textit{Matryoshka Representation Learning (MRL)}~\cite{kusupati2022matryoshka} enforces nested embeddings for multi-granular inference, while Minimum Redundancy Maximum Relevance (mRMR)~\cite{peng2005feature, ding2005minimum,bussmann2025learning,lai2026matryoshka} orders features to maximize information density. However, MRL operates on opaque latent embeddings unsuitable for human intervention, and mRMR is traditionally limited to tabular data. We unify MRL's structural efficiency with mRMR's information-theoretic rigor inside a semantic CBM, yielding a model that is simultaneously interpretable, intervention-efficient, and structurally elastic.

\section{Matryoshka Concept Bottleneck Models}
\label{sec:method}

In interpretable deep learning, a fundamental tension exists between \textit{model transparency} and \textit{deployment efficiency}. Since runtime concept annotation is costly, standard Concept Bottleneck Models (CBM) face a rigid dilemma: utilizing all concepts incurs prohibitive intervention costs, while discarding them degrades performance. Furthermore, adapting to different resource budgets typically requires training multiple specialized models, creating significant computational overhead.

We introduce the Matryoshka Concept Bottleneck Models (MCBM), which reimagine the concept layer as a \textit{nested information hierarchy} ordered by global importance. A single unified model then supports inference at varying granularities (e.g., top-8, top-16, or all concepts) without retraining, decoupling training from deployment constraints.

\subsection{Architecture Overview}

The MCBM data flow during end-to-end training follows a three-step pipeline: (1) \textbf{Concept Ranking}, which discovers a global topological ordering of concepts; (2) \textbf{Matryoshka Encoding}, which projects visual features into this ordered semantic space; and (3) \textbf{Nested Prediction}, where parallel heads jointly optimize accuracy across varying concept budgets.

\subsubsection{Concept Importance Ranking via mRMR}
\label{subsubsec:mrmr_ranking}

\textbf{Intuition.} 
Before training, we fix a hierarchical ordering of all concepts. Ranking concepts by individual discriminative power is insufficient: in fine-grained datasets, top-scoring concepts are often highly correlated (e.g., ``has wings'' and ``flight capable''), so a naively ranked prefix carries overlapping information and wastes the limited capacity of low-dimensional bottlenecks. We therefore seek a permutation that maximizes the \textit{information density} of every prefix.

\textbf{Formulation.} 
We employ the Minimum Redundancy Maximum Relevance (mRMR) criterion, a greedy procedure that iteratively picks the concept maximizing relevance to the target while minimizing redundancy with the already selected set. Let $\Omega$ be all concepts and $\mathcal{S}_{t-1}$ the $t-1$ already chosen; the $t$-th concept is selected from $\Omega \setminus \mathcal{S}_{t-1}$ as:
\begin{equation}
    c_t = \operatorname*{argmax}_{c_i \in \Omega \setminus \mathcal{S}_{t-1}} \left( \underbrace{I(c_i; y)}_{\text{Relevance}} - \frac{1}{|\mathcal{S}_{t-1}|} \sum_{c_j \in \mathcal{S}_{t-1}} \underbrace{I(c_i; c_j)}_{\text{Redundancy}} \right)
\end{equation}
where $I(\cdot; \cdot)$ denotes the Mutual Information (MI). For discrete variables (binary concepts and categorical labels), MI is defined as:
\begin{equation}
    I(X; Y) = \sum_{x \in X} \sum_{y \in Y} P(x, y) \log \frac{P(x, y)}{P(x)P(y)}
\end{equation}
where $P(x, y)$ is the joint probability distribution and $P(x), P(y)$ are the marginals.

\textbf{Interpretation.} 
The mRMR score explicitly balances two competing objectives to ensure information efficiency:
\begin{itemize}
    \item \textbf{Max-Relevance ($I(c_i; y)$):} Prioritizes concepts that maximize the mutual information with the target label $y$, ensuring high predictive power.
    \item \textbf{Min-Redundancy ($I(c_i; c_j)$):} Penalizes candidate concepts that share high mutual information with the already selected set $\mathcal{S}_{t-1}$, preventing information duplication.
\end{itemize}

Subtracting the redundancy term acts as a regularizer that favors \textit{complementary} concepts over discriminative but repetitive ones, and using mutual information instead of linear correlation captures non-linear semantic dependencies. The selection runs once \textit{before} network training at negligible $O(K^2)$ cost, producing a fixed topological blueprint that forces the model to populate the most critical semantic channels first.

\textbf{Intuitive Example.} For bird classification, ``has red breast'' and ``has red chest'' are both relevant for a Robin but nearly redundant. After selecting ``has red breast'', mRMR drops the marginal gain of ``has red chest'' and instead picks, e.g., ``beak shape'', so the first two dimensions cover diverse attributes (color \textit{and} shape) and maximize accuracy of the low-dimensional Matryoshka heads.

\subsubsection{Matryoshka Architecture and Joint Optimization}

\textbf{Intuition.}
Building upon the mRMR-optimized hierarchy, we design the neural architecture to enforce the ``Matryoshka property'': the representation at dimension $d$ should be a sufficient statistic for the task, even if dimensions $d+1 \dots K$ are truncated. This implies that the model cannot distribute information arbitrarily; it must compress the most vital semantic signals into the earliest slots defined by our ranking.

\textbf{Forward Pass and Heads.}
An input image $\mathbf{x}$ is mapped by a feature extractor $g_{\theta}(\cdot)$ to a raw concept-logit vector $\mathbf{l} \in \mathbb{R}^{K}$, then permuted by a deterministic module $\Pi$ following the mRMR indices into the ordered vector $\mathbf{c}_{\text{ord}} = \Pi(\mathbf{l})$. A set of parallel lightweight heads $\{f_{\phi_d}\}_{d \in \mathcal{D}}$, with nesting granularities $\mathcal{D} = \{d_1, \dots, d_n\}$ (e.g., $\{8, 16, 32, \dots\}$), each act as a linear classifier on the corresponding prefix:
\begin{equation}
    \hat{y}_d = f_{\phi_d}(\mathbf{c}_{\text{ord}}[1:d]) = \mathbf{W}_d \cdot \mathbf{c}_{\text{ord}}[1:d] + \mathbf{b}_d
\end{equation}
Because the input is ordered by mRMR, the head $f_{\phi_d}$ effectively utilizes the maximum possible information encodable in $d$ bits.

\textbf{Joint Optimization Objective.}
A joint loss trains the backbone and all heads simultaneously, combining concept alignment with multi-granularity task performance:
\begin{equation}
    \mathcal{L}_{\text{total}} = \alpha \cdot \mathcal{L}_{\text{concept}}(\mathbf{l}, \mathbf{c}_{GT}) + \sum_{d \in \mathcal{D}} \lambda_d \cdot \mathcal{L}_{\text{task}}(\hat{y}_d, y_{GT})
\end{equation}

In the standard MCBM, the encoder, permutation module, and all prediction heads are optimized in a single end-to-end run under Eq.~(3). Each mini-batch supplies concept supervision to preserve human semantics while every head imposes task supervision at its own prefix length, so the model learns one ordered bottleneck whose early coordinates remain useful for every downstream head, rather than stitching together isolated CBMs.

\textbf{Interpretation of the Loss Dynamics.}
The first component, \textbf{Concept Grounding ($\mathcal{L}_{\text{concept}}$)}, is a binary cross-entropy loss against the ground-truth concepts $\mathbf{c}_{GT}$. It anchors the bottleneck to human-defined semantics; without it, the encoder may treat the bottleneck as arbitrary latent variables and lose interpretability.

The second component, \textbf{Nested Task Supervision ($\sum \mathcal{L}_{\text{task}}$)}, is the engine of Matryoshka learning. Gradients from every head $f_{\phi_d}$ backpropagate to the shared encoder, so the earliest dimensions of $\mathbf{c}_{\text{ord}}$ accumulate updates from \textit{all} heads in $\mathcal{D}$ while later dimensions receive gradients only from the largest head. This differential pressure forms a ``gravity well'' that pushes the most robust, generalizable features into the early dimensions, since they must serve every head simultaneously.

\subsection{Efficient MCBM: Trading Performance for Memory}
\label{subsec:efficient_mcbm}

\textbf{Intuition.}
Standard MCBM uses distinct weights $\mathbf{W}_d$ per nesting level, allowing each head to specialize but incurring a memory cost linear in $|\mathcal{D}|$ and limiting inference to predefined checkpoints. Under tight memory budgets or ``any-time'' inference (stopping at \textit{any} integer $k$), this becomes wasteful, motivating the \textbf{Efficient MCBM} variant.

\textbf{Mechanism.}
Instead of instantiating independent physical heads $\{f_{\phi_d}\}$, the Efficient MCBM utilizes a \textit{single, shared weight matrix} $\mathbf{W}_{\text{shared}} \in \mathbb{R}^{C \times K}$ (where $C$ is the number of classes and $K$ is the total concept count). To emulate the Matryoshka behavior, we apply a dynamic masking operation during training.
For a given nesting level $d$, we construct a binary mask $\mathbf{M}_d \in \{0, 1\}^K$ where the first $d$ entries are 1 and the rest are 0. The prediction at level $d$ is computed as:
\begin{equation}
    \hat{y}_d = (\mathbf{W}_{\text{shared}} \odot \mathbf{M}_d) \cdot \mathbf{c}_{\text{ord}}
\end{equation}
where $\odot$ denotes row-wise broadcasting of the mask. During training, our default implementation iterates through all levels in $\mathcal{D}$ and sums their losses for each batch, giving the lowest-dimensional prefixes dense gradient signals. A lighter random-level variant can reduce training cost, but we treat it as an efficiency ablation rather than the default because extreme compression levels are most sensitive to sparse supervision.

\textbf{Implications and Trade-offs.}
Weight sharing forces a single matrix to be jointly optimal for early predictions and later refinements, collapsing parameter overhead to $O(C \cdot K)$ and enabling \textit{continuous elasticity} where inference can stop at any integer $k$. The cost is a small accuracy gap (Section~\ref{sec:experiments}), since shared weights cannot specialize per scale, giving practitioners a clean choice between peak accuracy (Standard) and minimum memory footprint (Efficient).

\section{Theoretical Analysis}
\label{sec:theory}

Our empirical results demonstrate that MCBM achieves high accuracy with low intervention costs. In this section, we provide a theoretical foundation for these observations from two perspectives: (1) \textbf{Efficiency}, determining the asymptotic scaling of intervention cost, and (2) \textbf{Correctness}, establishing an upper bound on classification error under intervention. Detailed proofs are provided in Appendix \ref{sec:appendix_regimes} and \ref{sec:proof_reduction}.

\subsection{Asymptotic Intervention Efficiency}
\label{subsec:theory_efficiency}

MCBM imposes a nested structure on the otherwise flat $O(K)$ intervention budget of standard CBM. We model intervention as a ``lazy verification'' workflow: inspect ranked concepts from coarse to fine, correct the current prefix, and stop once the prediction is fixed. The stopping level $\ell^*$ is minimal-sufficient, so the analysis does not require exhausting all concepts. We bound the expected cost $E = \mathbb{E}[N(x, y)]$ under two structural assumptions:

\begin{enumerate}
    \item \textbf{Geometric Concept Growth:} The number of concepts grows as $k_i = k_1 r^{i-1}$ (for rate $r > 1$).
    \item \textbf{Geometric Information Decay:} The probability of needing to intervene deeper decays as $P(\ell^*=i) \leq C \gamma^{i-1}$ (for decay $\gamma \in (0,1)$).
\end{enumerate}

The efficiency of MCBM is determined by the interplay between the model's capacity expansion ($r$) and its information capture rate ($1/\gamma$).
Empirically, this assumption is testable by measuring the distribution of $\ell^*$ over initially misclassified samples; on CUB, the observed mass concentrates on early levels and follows an exponential trend with fitted $\gamma=0.9636$ (Appendix~\ref{sec:app_geometric_decay}).

\begin{theorem}[Intervention Efficiency Regimes]
\label{thm:efficiency_regimes}
Let $\rho = r\gamma$ be the spectral ratio between concept growth and probability decay. The expected intervention cost $E$ scales with the total concept count $K$ according to three distinct regimes:
\begin{equation}
E = \begin{cases} 
\Theta(1) & \text{if } \gamma < 1/r \quad \text{(Efficient Regime)} \\
\Theta(\log K) & \text{if } \gamma = 1/r \quad \text{(Balanced Regime)} \\
\Theta(K^\alpha) & \text{if } \gamma > 1/r \quad \text{(Heavy-Tailed Regime)}
\end{cases}
\end{equation}
where $\alpha = 1 + \log_r \gamma > 0$.
\end{theorem}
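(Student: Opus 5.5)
We model the cost of a sample whose minimal-sufficient stopping level is $\ell^*=i$ as the number of concepts inspected up to that level. Under lazy verification this is the prefix size, $N = k_i = k_1 r^{i-1}$. If one instead charges the cumulative cost $\sum_{j\le i} k_j$ of re-inspecting every level, the result is the same up to the constant $r/(r-1)$, because the levels grow geometrically. With $L$ levels and $k_L = K$, we have $L = 1 + \log_r(K/k_1) = \Theta(\log K)$. The expected cost is therefore
\begin{equation*}
E = \sum_{i=1}^{L} P(\ell^*=i)\, k_1 r^{i-1} \;\le\; C k_1 \sum_{i=1}^{L} (r\gamma)^{i-1} = C k_1 \sum_{i=0}^{L-1} \rho^{i}.
\end{equation*}
The whole proof then reduces to the behaviour of a finite geometric series in $\rho=r\gamma$ with $L=\Theta(\log K)$ terms. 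I will handle the three regimes in turn.

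\textbf{Upper bounds.} If $\rho<1$, the series is bounded by $1/(1-\rho)$, so $E=O(1)$. If $\rho=1$, every term equals $1$ and $E\le Ck_1 L = O(\log K)$. If $\rho>1$, the sum is at most $\rho^{L}/(\rho-1)$, so $E = O(\rho^{L})$. Substituting $L-1=\log_r(K/k_1)$ gives
\begin{equation*}
\rho^{L-1} = (K/k_1)^{\log_r \rho} = (K/k_1)^{1+\log_r\gamma}.
\end{equation*}
This is exactly $\Theta(K^\alpha)$ with $\alpha = 1+\log_r\gamma$, and $\alpha>0$ holds precisely when $\gamma>1/r$.

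\textbf{Lower bounds.} This is the main obstacle. The assumption only gives $P(\ell^*=i)\le C\gamma^{i-1}$, so without more, the $\Theta$ claims can fail: if all mass sits at level $1$, then $E=\Theta(1)$ in every regime. In the efficient regime the bound $E\ge k_1$ is trivial, since every sample costs at least $k_1$. For the other two regimes I will make explicit that the decay rate is tight, i.e.\ I add a matching lower bound $P(\ell^*=i)\ge c\,\gamma^{i-1}$ for some $c>0$. This is the sense in which the geometric decay model is meant, and it matches the fitted exponential trend on CUB. Under that two-sided assumption the same computation gives $E \ge c k_1 \sum_{i<L}\rho^i$, which is $\Omega(\log K)$ when $\rho=1$ and $\Omega(\rho^{L-1})=\Omega(K^\alpha)$ when $\rho>1$.

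\textbf{Normalization.} One more detail needs checking: with $L$ growing, the normalization $\sum_i P(\ell^*=i)=1$ must stay consistent with a lower bound of order $\gamma^{i-1}$. Since $\gamma<1$, we have $\sum_i\gamma^{i-1}\le 1/(1-\gamma)$, so constants $c,C$ independent of $K$ exist. I would state this as a remark in the appendix proof.
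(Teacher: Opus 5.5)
Your proposal is correct and follows the paper's own route. You substitute $k_i=k_1r^{i-1}$ and the geometric bound on $P(\ell^*=i)$ to reduce $E$ to $Ck_1\sum_{i=0}^{L-1}\rho^{i}$ with $L=\Theta(\log K)$, then split into the cases $\rho<1$, $\rho=1$, $\rho>1$. The one real difference is that you state explicitly the matching lower bound $P(\ell^*=i)\ge c\,\gamma^{i-1}$ that the $\Theta$ (as opposed to $O$) claims need. The paper uses it only tacitly, through its ``$\approx$'' and ``$\propto$'' steps and its reading of $C$ as a normalization constant. Your all-mass-at-level-$1$ example correctly shows that the one-sided assumption alone does not give $\Theta$.
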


\begin{remark}
\textbf{(The Spectral Race)} This theorem frames efficiency as a ``race'' between adding costs and gaining information.
\begin{itemize}
    \item \textbf{Balanced ($\gamma = 1/r$):} costs scale as $\Theta(\log K)$, \textbf{formally reducing human effort from linear $O(K)$ to logarithmic order.}
    \item \textbf{Efficient ($\gamma < 1/r$):} information capture outpaces growth; costs converge to a constant $\Theta(1)$, surpassing the logarithmic guarantee.
    \item \textbf{Heavy-Tailed ($\gamma > 1/r$):} unordered concepts degrade to polynomial scaling, underscoring the necessity of the mRMR prior.
\end{itemize}
\end{remark}

\subsection{Error Reduction Guarantee via Hellman-Raviv}
\label{subsec:theory_correctness}

While classical analysis often uses Fano's Inequality to establish lower bounds on error, we are interested in a sufficiency condition: does intervening on MCBM concepts \textit{guarantee} a reduction in error? We utilize the Hellman-Raviv Inequality to derive an upper bound on the classification error $P_e(k)$ after $k$ interventions.


\begin{theorem}[Upper Bound on Intervention Error]
\label{thm:error_bound_main}
Let $\epsilon$ be the intervention shift penalty, defined by the Kullback-Leibler divergence between the training distribution (soft concepts) and the intervention distribution (hard concepts). The probability of error after $k$ interventions is upper-bounded by:
\begin{equation}
P_e(k) \leq \frac{1}{2} \left( H(Y) - I(Y; \tilde{C}^{(k)}) \right) + \sqrt{\frac{\epsilon}{2}},
\end{equation}
where $H(Y)$ is the marginal entropy of the labels and $I(Y; \tilde{C}^{(k)})$ is the mutual information between the labels and the intervened concept vector.
\end{theorem}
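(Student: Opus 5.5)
The plan is to combine two ingredients. The first is the classical Hellman--Raviv inequality applied to the Bayes error under the ideal (hard, intervened) concept distribution. The second is a change-of-measure step that transfers this bound to the distribution the model actually operates under, paying a total-variation price that Pinsker's inequality converts into the $\sqrt{\epsilon/2}$ term.

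\textbf{Step 1: Hellman--Raviv under the intervention distribution.} Let $Q$ denote the joint law of $(Y,\tilde{C}^{(k)})$ after $k$ concepts have been replaced by their ground-truth hard values. Hellman--Raviv states that the Bayes error $P_e^{Q}$ of predicting $Y$ from $\tilde{C}^{(k)}$ satisfies $P_e^{Q} \le \tfrac12 H(Y\mid \tilde{C}^{(k)})$, with entropies measured in bits. Writing $H(Y\mid\tilde{C}^{(k)}) = H(Y) - I(Y;\tilde{C}^{(k)})$ gives the first term of the bound. I will state explicitly that the classifier is assumed to be (or to approximate) the Bayes decision rule, or else interpret $P_e(k)$ as the Bayes-optimal error at level $k$. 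Without such an assumption, no information-theoretic upper bound can hold for an arbitrary head $f_{\phi_d}$.

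\textbf{Step 2: Accounting for the soft/hard shift.} The heads are fit on the training law $P$ (soft concepts), so the deployed decision rule is optimal for $P$ rather than $Q$. For any fixed decision rule $h$, the error is the expectation of the indicator $\mathbf{1}[h(\tilde c)\neq y]$, which is bounded in $[0,1]$. Hence
\[
\bigl|\Pr_P[h \text{ errs}] - \Pr_Q[h \text{ errs}]\bigr| \le \mathrm{TV}(P,Q),
\]
and Pinsker's inequality gives $\mathrm{TV}(P,Q)\le \sqrt{\mathrm{KL}(Q\|P)/2} = \sqrt{\epsilon/2}$. To complete the argument, I will:
\begin{enumerate}
\item apply the Hellman--Raviv bound to the rule that is Bayes-optimal with respect to the relevant distribution;
\item add the $\sqrt{\epsilon/2}$ perturbation to pass between $P$ and $Q$;
\item arrange the chain so that the entropy term is always evaluated under the intervened law. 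This works because $I(Y;\tilde{C}^{(k)})$ is defined for the intervened vector.
\end{enumerate}
The KL direction must match Pinsker, but TV is symmetric, so either direction of $\epsilon$ works.

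\textbf{Main obstacle.} The delicate point is the mismatch between where optimality holds and where the information quantity is measured. The Bayes rule for $P$ is not the Bayes rule for $Q$, so it is not automatic to bound $P$'s rule's error under $Q$ by $\tfrac12 H_Q(Y\mid\tilde C)$. My first attempt is the following chain:
\begin{enumerate}
\item bound $\Pr_Q[h_P \text{ errs}] \le \Pr_P[h_P\text{ errs}] + \sqrt{\epsilon/2}$;
\item bound $\Pr_P[h_P\text{ errs}] \le \tfrac12 H_P(Y\mid \tilde C)$ by Hellman--Raviv under $P$.
\end{enumerate}
This chain leaves a residual $H_P$ versus $H_Q$ discrepancy. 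If that discrepancy cannot be absorbed, I will instead define $P_e(k)$ as the error of the rule $h_Q$, which is optimal on intervened concepts, evaluated under the deployed distribution, and reverse the chain. If neither ordering closes cleanly, I will state as a modelling assumption that the conditional entropy is evaluated on the intervened distribution. This is consistent with the theorem's wording, which places $I(Y;\tilde C^{(k)})$ on the intervened vector, and with that assumption the two-term bound follows directly. A final remark will note that $I(Y;\tilde C^{(k)})$ is nondecreasing in $k$ by the chain rule, which links the bound to the monotone improvement claim.
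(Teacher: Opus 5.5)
Your ingredients are the paper's: Hellman--Raviv, the identity $H(Y\mid\tilde C^{(k)})=H(Y)-I(Y;\tilde C^{(k)})$, and Pinsker. The paper, however, never performs the change of measure you are worried about. It applies Hellman--Raviv directly to $P_e(k)$ and then simply adds $\sqrt{\epsilon/2}$ ``to the previous bound''. That is valid only if the first step already holds, i.e.\ if $P_e(k)$ is the Bayes error for the same law under which the entropy is computed; in that case the $\epsilon$ term is pure slack. Your option of reading $P_e(k)$ as the Bayes-optimal error at level $k$ is exactly this, and under it your Step~1 alone proves the statement. You are also right to insist on bits. The paper's lemma is stated in nats, where Hellman--Raviv is false: uniform binary $Y$ with uninformative $Z$ gives $P_e=\tfrac12>\tfrac12\ln 2$.

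The gap is in the case you actually set up in Step~2, where the deployed rule is $h_P$ and its error is measured under $Q$. Your final fallback is to evaluate the entropy under $Q$ and say the two-term bound ``follows directly''; it does not. Write $R_\mu(h)=\Pr_\mu[h(\tilde C^{(k)})\neq Y]$ and $\mathrm{TV}=\mathrm{TV}(P,Q)$. Using only that $h_P$ is Bayes for $P$ and $h_Q$ is Bayes for $Q$, what you can prove is
\[
R_Q(h_P)\le R_P(h_P)+\mathrm{TV}\le R_P(h_Q)+\mathrm{TV}\le R_Q(h_Q)+2\,\mathrm{TV}\le \tfrac12 H_Q(Y\mid\tilde C^{(k)})+2\sqrt{\epsilon/2}.
\]
This keeps the information term on the intervened vector but doubles the shift penalty, and the stated constant is genuinely false in this reading. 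Take $\tilde C^{(k)}$ uniform on $\{a,b\}$ under both laws, with $P(Y{=}1\mid a)=0.501=1-P(Y{=}1\mid b)$ and $Q(Y{=}1\mid a)=0.3=1-Q(Y{=}1\mid b)$, so $Y$ is uniform under both. Then $R_Q(h_P)=0.7$. Meanwhile $\tfrac12 H_Q(Y\mid\tilde C^{(k)})=\tfrac12 h_2(0.3)\approx 0.441$ bits, with $h_2$ the binary entropy, and $\epsilon\le 0.127$ for either KL direction and either unit. Hence the right-hand side is at most about $0.693<0.7$.

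You must therefore commit to one consistent reading. The options are: $P_e(k)$ is the Bayes error under the intervened law (the paper's implicit choice); your first chain, with $H$ and $I$ taken under the training law; or $h_P$ evaluated under $Q$, with $\sqrt{2\epsilon}$ in place of $\sqrt{\epsilon/2}$. Your ``reverse the chain'' option either collapses to the first of these, or bounds the error of $h_Q$ on soft concepts, which is not the post-intervention error.

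Lastly, your closing monotonicity remark does not follow from the chain rule. $\tilde C^{(k+1)}$ swaps $\hat c_{k+1}$ for $c^*_{k+1}$ rather than appending a coordinate, so
\[
I(Y;\tilde C^{(k+1)})-I(Y;\tilde C^{(k)})=I(Y;c^*_{k+1}\mid\tilde C^{(k)})-I(Y;\hat c_{k+1}\mid\tilde C^{(k+1)}),
\]
which can be negative when the soft concept leaks label information the hard one lacks.
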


\begin{remark}
\textbf{(Justification of Matryoshka Objective)} This theorem provides a rigorous justification for our training objective. To minimize the error bound $P_e(k)$, one must maximize the mutual information $I(Y; \tilde{C}^{(k)})$. Unlike standard CBM that only maximize this for the full vector ($k=K$), the Matryoshka loss explicitly maximizes this mutual information for every prefix $k \in \mathcal{D}$. This guarantees that the error bound tightens monotonically with every additional intervention step.
\end{remark}

\begin{remark}
\textbf{(Practical Non-Idealities)} Deep CBMs only approximate the Bayesian classifier used in the Hellman--Raviv argument, and joint training can introduce concept leakage~\cite{parisini2025leakage}. These gaps are absorbed by $\epsilon$: severe leakage, overfitting, or soft-to-hard intervention shift inflates the KL penalty and loosens the bound. A strictly sequential MCBM that isolates $X\!\to\!C$ from $C\!\to\!Y$ still preserves strong low-budget accuracy in our experiments.
\end{remark}
\section{Experiments}
\label{sec:experiments}

We evaluate MCBM on three fine-grained benchmarks, CUB-200-2011~\cite{wah2011caltech}, LAD~\cite{zhao2019lad}, and CelebA~\cite{liu2015celeba}, around four research questions. \textbf{RQ1:} Does a single MCBM match independently trained models across concept granularities? \textbf{RQ2:} Does mRMR ordering reduce the human effort of test-time intervention? \textbf{RQ3:} Does MCBM offer a smooth annotation-accuracy trade-off without retraining? \textbf{RQ4:} How effectively does mRMR concentrate discriminative information in early dimensions versus random or heuristic orderings?


\subsection{Experimental Setup}

\textbf{Datasets and Metrics.} We use \textbf{CUB} ($K=112$)~\cite{wah2011caltech}, \textbf{CelebA} ($K=40$)~\cite{liu2015celeba}, and \textbf{LAD} ($K=256$)~\cite{zhao2019lad}. CelebA is treated as a binary attribute-based benchmark with \textit{Attractive} as the target and the remaining facial attributes as concepts. We report \textit{Accuracy} and \textit{Macro F1}, and \textit{Accuracy@k} (accuracy after correcting the top-$k$ concepts) for intervention efficiency.

\textbf{Baselines.} We compare MCBM against: (1) \textbf{Independent CBM}, trained individually for each bottleneck size; (2) \textbf{Sequential CBM}, which isolates concept learning from label prediction; (3) \textbf{Sparse CBM} and \textbf{Label-free CBM}~\cite{oikarinen2023labelfree}; (4) \textbf{Random-Order MCBM}, to ablate the efficacy of mRMR ranking; and (5) \textbf{Efficient MCBM}, our weight-tying variant for memory-constrained scenarios. Table~\ref{tab:cub_baselines} reports CUB baselines under the official split.

\textbf{Implementation.} The default backbone is pre-trained Inception\_v3 (Sec.~\ref{subsec:backbone}). ResNet-50, Inception\_v3, and ViT-B/16 are fine-tuned end-to-end from standard pre-trained weights; CLIP-ViT-B/32 is kept frozen as a zero-shot baseline. Training uses the joint Matryoshka loss (Eq.~2) with dataset-specific nesting dimensions $\mathcal{D}$.

\subsection{Matryoshka Concept Learning Efficacy}
\label{subsec:performance}

We first analyze hierarchical representation learning within a single training run. Table~\ref{tab:mcbm_efficient_comparison} compares \textit{Standard} (independent heads) versus \textit{Efficient MCBM} (weight-tying), while Figure~\ref{fig:val_acc_per_head} visualizes training dynamics.

\begin{figure}[h]
    \centering
    \begin{minipage}[t]{0.39\linewidth}
        \centering
        \begin{minipage}[t][0.17\textheight][c]{\linewidth}
            \centering
            \resizebox{0.92\linewidth}{!}{%
            \scriptsize
            \setlength{\tabcolsep}{4pt}
            \begin{tabular}{lc}
            \toprule
            \textbf{Method} & \textbf{Test Acc. (\%)} \\
            \midrule
            Sparse CBM & 53.75 \\
            Sequential CBM & 69.87 \\
            Independent CBM & 70.38 \\
            Label-free CBM~\cite{oikarinen2023labelfree} & 74.06 \\
            MCBM (Standard, $K=112$) & 73.20 \\
            Efficient MCBM ($K=112$) & 72.94 \\
            \bottomrule
            \end{tabular}%
            }
        \end{minipage}
        \begin{minipage}[t][0.06\textheight][t]{\linewidth}
            \captionof{table}{\textbf{CUB baseline comparison.} Test accuracy under the official split.}
            \label{tab:cub_baselines}
        \end{minipage}
    \end{minipage}
    \hfill
    \begin{minipage}[t]{0.58\linewidth}
        \centering
        \begin{minipage}[t][0.17\textheight][c]{\linewidth}
            \centering
            \includegraphics[width=\linewidth,height=0.17\textheight,keepaspectratio]{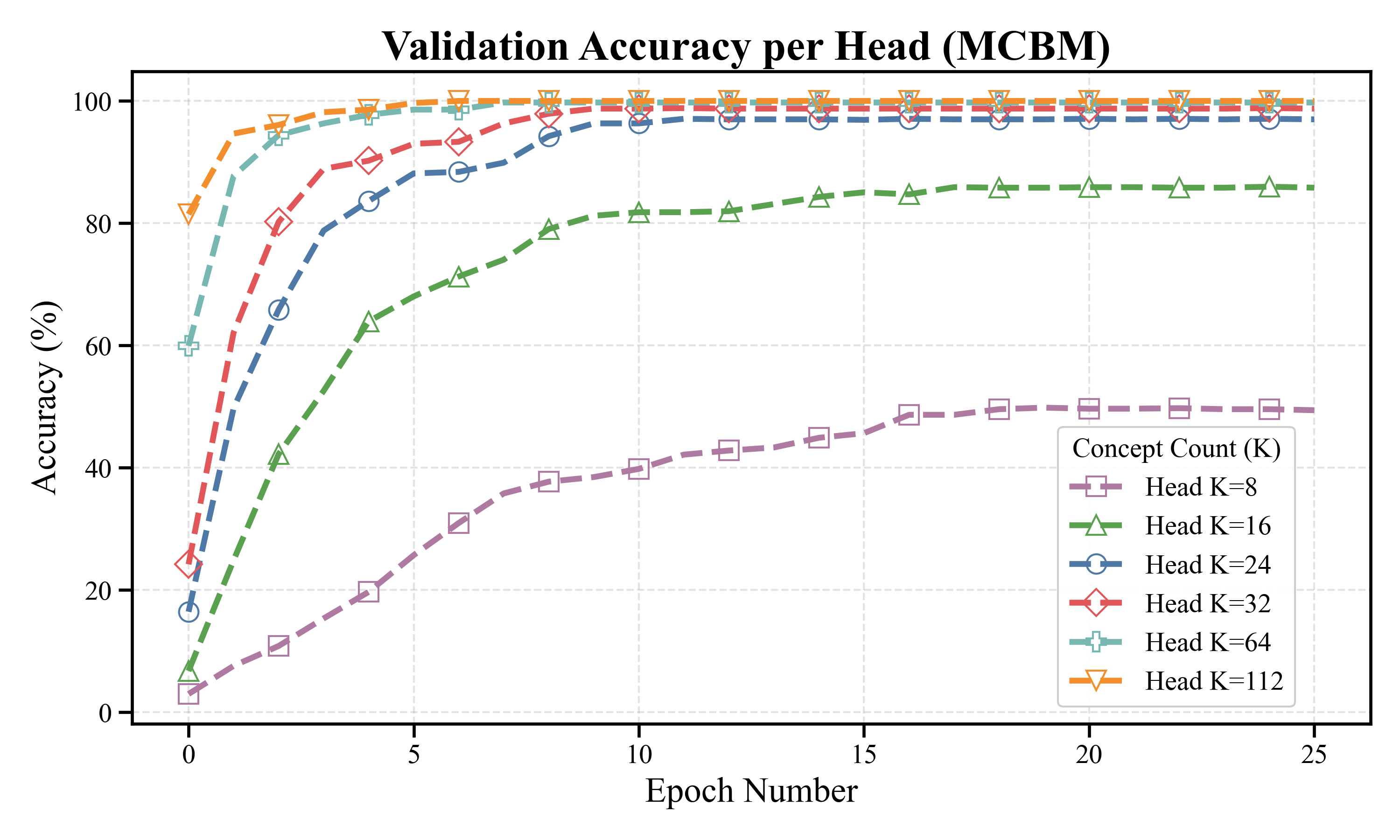}
        \end{minipage}
        \begin{minipage}[t][0.06\textheight][t]{\linewidth}
            \captionof{figure}{\textbf{Validation accuracy per head (CUB).} Compressed heads rapidly approach the full model.}
            \label{fig:val_acc_per_head}
        \end{minipage}
    \end{minipage}
\end{figure}


\textbf{Task-Dependent Saturation and Competitive Performance (RQ1).}
Table~\ref{tab:mcbm_efficient_comparison} reveals distinct saturation patterns: CUB and CelebA plateau early at $K=32$ and $K=8$, while LAD requires up to $K=256$, so a single MCBM can be aggressively truncated for simple tasks and kept full-capacity for complex ones. Figure~\ref{fig:val_acc_per_head} further shows stable simultaneous convergence across nesting levels, with compressed heads (e.g., $K=32$) rapidly approaching the full-capacity head ($K=112$); Efficient MCBM preserves this behavior with only a $\sim$1\% drop on LAD, matching independently trained models without maintaining multiple architectures.

\subsection{Efficient Concept Intervention}
\label{subsec:intervention}

A primary contribution of MCBM is minimizing the cost of human-in-the-loop intervention. We simulate a test-time scenario where a user sequentially corrects mispredicted concepts. Figure~\ref{fig:intervention_efficiency} compares the efficacy of our \textit{mRMR-based ordering} against a \textit{Random ordering} baseline across CUB, CelebA, and LAD.



\textbf{Efficiency and Cognitive Load (RQ2).} 
mRMR ordering lowers intervention cost by an order of magnitude versus unordered baselines. On CelebA, MCBM reaches near-perfect accuracy after merely \textbf{3 interventions}, rendering the remaining 37 concepts effectively redundant and validating a ``lazy verification'' strategy. The advantage persists in complex ``threshold regimes'' such as LAD ($K=256$), where mRMR initiates error recovery much earlier (around $k=125$) than random selection, prioritizing discriminative signals regardless of task complexity.

\begin{figure}[ht]

    \centering
    \begin{minipage}[b]{0.51\linewidth}
        \centering
        \resizebox{\linewidth}{!}{%
            \scriptsize
            \setlength{\tabcolsep}{4pt}
            \begin{tabular}{l|cc|cc}
                \toprule
                \multirow{2}{*}{\textbf{Head Size ($K$)}}
                    & \multicolumn{2}{c|}{\textbf{MCBM (Standard)}}
                    & \multicolumn{2}{c}{\textbf{Efficient MCBM}} \\
                & \textbf{Acc (↑,\%)} & \textbf{F1 (↑,\%)}
                & \textbf{Acc (↑,\%)} & \textbf{F1 (↑,\%)} \\
                \midrule
                \multicolumn{5}{l}{\textit{\textbf{CUB Dataset}}} \\
                \midrule
                $K=8$   & 40.94 & 34.37 & 38.30 & 32.23 \\
                $K=16$  & 63.24 & 61.39 & 61.70 & 59.93 \\
                \rowcolor{rowblue} $K=24$  & 70.33 & 69.75 & 69.31 & 69.94 \\
                \rowcolor{rowblue} $K=32$  & 71.80 & 71.73 & 71.30 & 71.33 \\
                \rowcolor{rowblue} $K=64$  & 73.21 & 73.19 & 72.68 & 72.83 \\
                \rowcolor{rowblue} $K=112$ & 73.20 & 73.35 & 72.94 & 73.10 \\
                \midrule
                \multicolumn{5}{l}{\textit{\textbf{LAD Dataset}}} \\
                \midrule
                $K=8$   & 7.76  & 2.02  & 7.86  & 4.07  \\
                $K=16$  & 14.64 & 6.62  & 14.08 & 8.49  \\
                $K=24$  & 22.22 & 14.07 & 21.22 & 13.97 \\
                $K=32$  & 35.40 & 26.25 & 35.09 & 24.57 \\
                $K=64$  & 67.34 & 57.64 & 66.49 & 53.03 \\
                \rowcolor{rowblue} $K=128$ & 94.62 & 93.76 & 94.15 & 93.88 \\
                \rowcolor{rowblue} $K=256$ & 97.89 & 97.76 & 97.14 & 94.91 \\
                \midrule
                \multicolumn{5}{l}{\textit{\textbf{CelebA Dataset}}} \\
                \midrule
                \rowcolor{rowblue} $K=8$  & 79.40 & 79.28 & 79.16 & 79.13 \\
                \rowcolor{rowblue} $K=16$ & 79.50 & 79.41 & 79.30 & 79.18 \\
                \rowcolor{rowblue} $K=24$ & 79.52 & 79.45 & 79.26 & 79.13 \\
                \rowcolor{rowblue} $K=32$ & 79.52 & 79.46 & 79.23 & 79.09 \\
                \rowcolor{rowblue} $K=40$ & 79.58 & 79.51 & 79.11 & 78.94 \\
                \bottomrule
            \end{tabular}%
        }
        \captionof{table}{\textbf{Performance comparison on CUB, LAD, and CelebA.}
        Accuracy and F1 of Standard and Efficient MCBM across nesting dimensions $K$.
        Blue rows highlight the saturation regime where additional heads yield diminishing returns,
        indicating that a much smaller $K$ already captures most of the achievable accuracy.}
        \label{tab:mcbm_efficient_comparison}
    \end{minipage}
    \hfill
    \begin{minipage}[b]{0.48\linewidth}
        \centering
        \includegraphics[width=\linewidth]{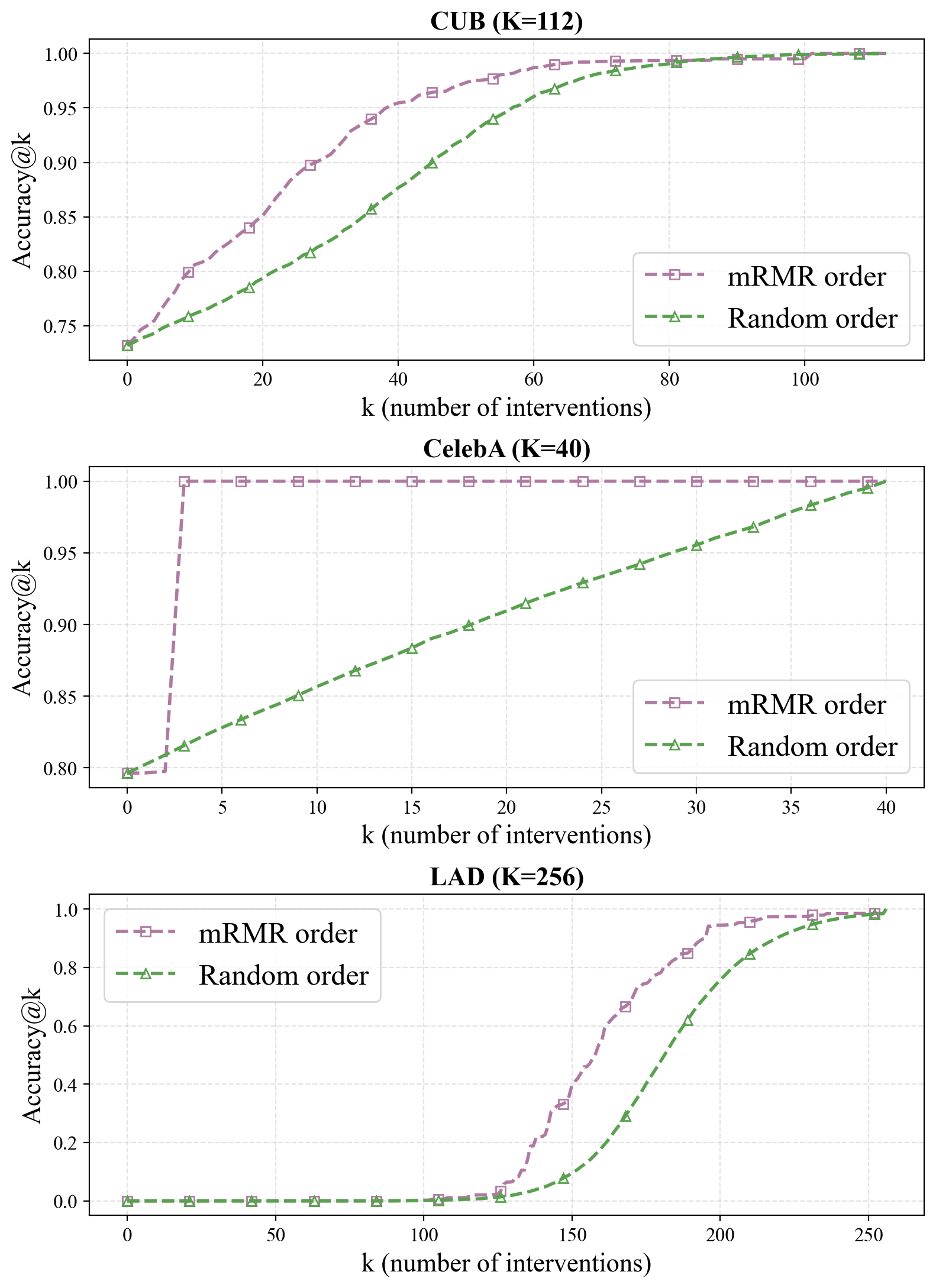}
        \captionof{figure}{\textbf{Intervention efficiency across datasets.}
        Accuracy@$k$ vs.\ intervention count $k$; mRMR ordering consistently dominates random ordering.}
        \label{fig:intervention_efficiency}
    \end{minipage}
    \end{figure}

\textbf{Dynamic Flexibility and Trade-offs (RQ3).} 
The steep recovery curves make MCBM an ``any-time'' intervention system: practitioners can adjust intervention depth at test time without retraining, stopping early to secure most of the gains or continuing for higher precision, all within one unified model.



\textbf{Leakage, Stability, and Matched Heads.}
Additional CUB diagnostics support that the gains are due to semantic ordering rather than shortcut leakage. The top-$K$ concept predictor remains accurate across budgets (92.44\% at $K=8$, 93.14\% at $K=16$, 93.80\% at $K=32$, and 96.27\% at $K=112$). A strictly Sequential MCBM, which blocks task gradients from reaching the concept extractor, still achieves 72.09\% accuracy at $K=16$ and 73.75\% at $K=112$. The mRMR hierarchy is also stable: across five random seeds, the selected top-$K$ sets for $K\in\{8,16,24,32\}$ have IoU $=1.00$, and a full-head weight-based ordering strongly correlates with mRMR (Spearman $=0.934$) while yielding a near-identical recovery curve. Finally, using the full $K=112$ head for partial interventions is inferior to matched Matryoshka heads, dropping from 88.71\% to 83.03\% at $k=16$ and from 98.46\% to 91.92\% at $k=32$, due to soft/hard concept distribution shift. Full ablations are in Appendix~\ref{sec:app_rebuttal_experiments}.

\subsection{Impact of Backbone Architecture}
\label{subsec:backbone}

To assess the generality of MCBM, we evaluate its performance when scaling the feature extractor across different architectural paradigms. We compare ResNet-18, ResNet-50, Inception\_v3, Vision Transformers (ViT-B/16), and a pre-trained Vision-Language model (CLIP-ViT-B/32). Figure~\ref{fig:backbone_comparison}(a) reports the F1 Score across various nesting levels on the CUB dataset.

\textbf{Architecture Agnosticism and Inception\_v3 Advantage.}
The ``Matryoshka property''---monotonic improvement with concept capacity---holds across CNNs and Transformers, confirming that MCBM is compatible with modern vision backbones. Among them, Inception\_v3 consistently outperforms ResNet-50 and ViT by roughly 5--10\% F1, and the gap is most pronounced at low granularities ($K=8, 16$), suggesting that its multi-scale inception modules align well with the hierarchical nature of fine-grained concept detection.

\textbf{Limitations of Zero-Shot Features.}
The frozen CLIP backbone underperforms on this fine-grained task: while vision-language pre-training offers generality, supervised fine-tuning remains essential for capturing subtle attributes such as ``beak shape''.

\subsection{Impact of Ranking Strategy on Performance}
\label{subsec:ranking_impact}

While Section \ref{subsec:intervention} demonstrated the benefit of mRMR for \textit{human intervention}, we now investigate its impact on \textit{model representation learning}. We compare two MCBM models trained identically on CUB, differing only in their concept ordering: one uses our proposed mRMR schedule, while the other uses a fixed Random order. Figure~\ref{fig:ranking_impact}(b) visualizes the F1 scores across nesting levels.

\textbf{The Necessity of mRMR for Compression.}
The results reveal a stark contrast in information density. At low dimensions, the mRMR-based model dominates the random baseline. For instance, at $K=8$, the Random model collapses to near-chance performance with approximately 3\% F1 score, whereas the mRMR model retains a respectable F1 score of nearly 34\%. Similarly, at $K=16$, mRMR outperforms Random by nearly 40 percentage points. This disparity confirms that the ``Matryoshka property'', which denotes the ability to perform valid inference at early exits, does not emerge spontaneously from the joint loss alone. It requires the mRMR prior to explicitly guide the compression of semantic information into the earliest dimensions.

As concepts approach the full set ($K=112$), the gap closes and both methods reach parity ($\sim$75\% F1), since ordering becomes irrelevant once the full vocabulary is available---reinforcing that the value of mRMR lies in elasticity, not in peak accuracy.

\begin{figure}[ht]
    \centering
    \begin{minipage}{0.49\linewidth}
        \centering
        \textbf{(a)}\\[-2pt]
        \includegraphics[width=\linewidth]{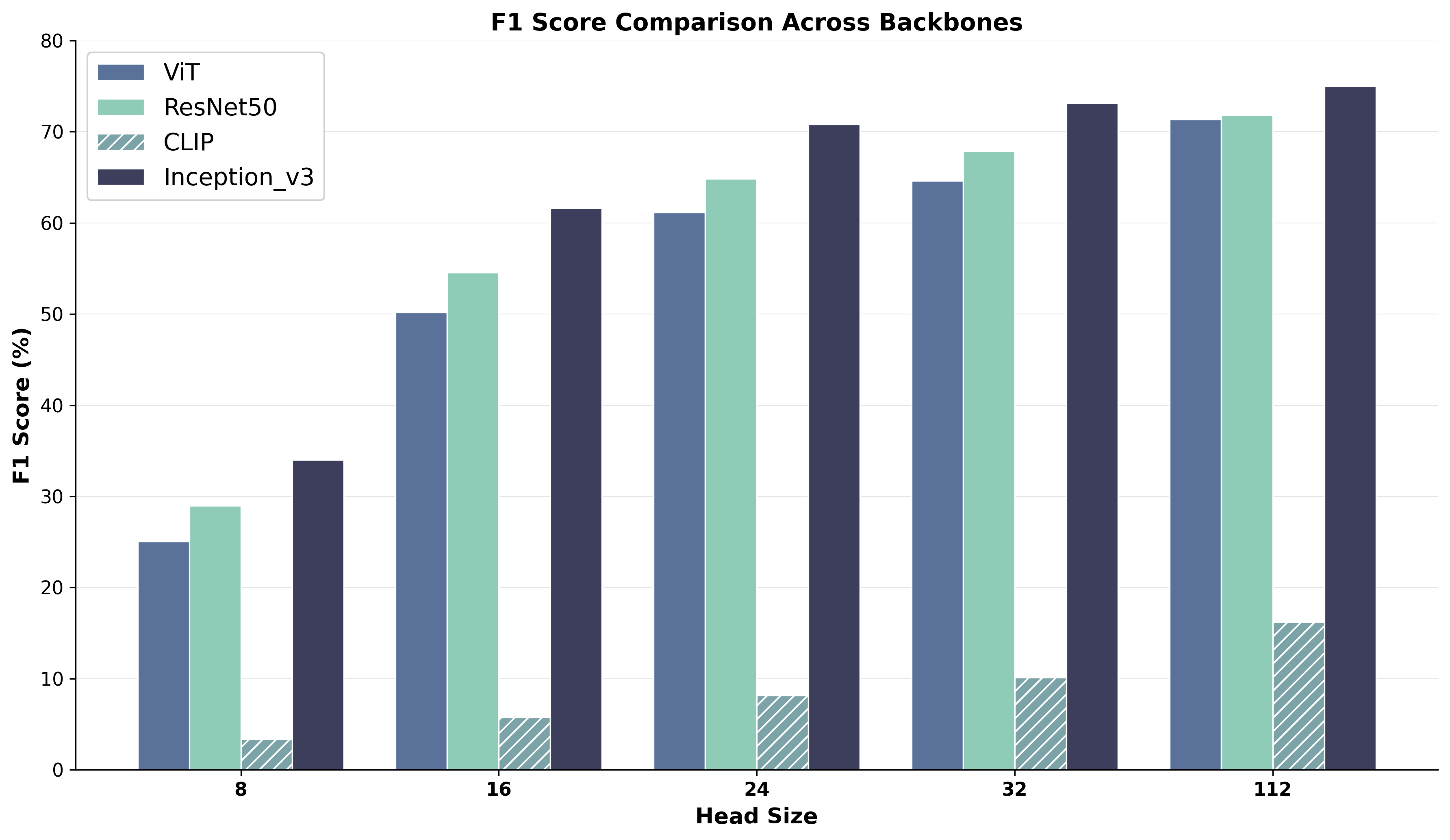}
    \end{minipage}
    \hfill
    \begin{minipage}{0.49\linewidth}
        \centering
        \textbf{(b)}\\[-2pt]
        \includegraphics[width=\linewidth]{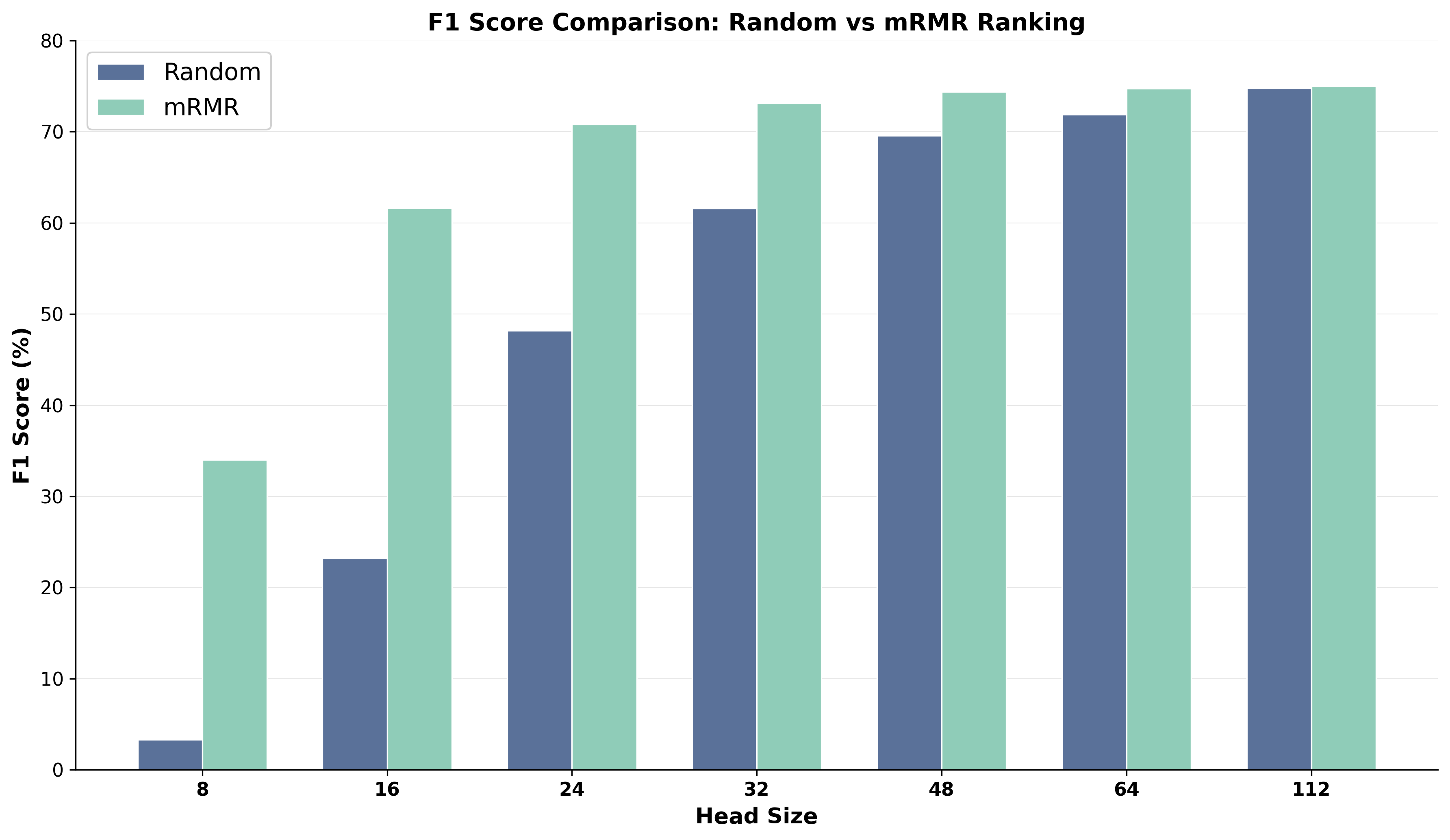}
    \end{minipage}
    \caption{\textbf{Backbone and Ranking Effects.} (a) Inception\_v3 consistently yields the highest F1, while frozen CLIP struggles with fine-grained attributes. (b) mRMR significantly outperforms random ordering at low dimensions, enabling effective model compression.}
    \label{fig:backbone_comparison}
    \label{fig:ranking_impact}
\end{figure}


\textbf{Information Concentration (RQ4).} 
mRMR is the structural prior that makes Matryoshka compression usable: the catastrophic collapse of the random baseline at $K=8$ is a counterfactual proof that joint loss alone does not hierarchically organize the latent space. By front-loading the most discriminative semantic signals, mRMR turns the bottleneck from a flat representation into a strictly ordered hierarchy that both recognizes concepts and encodes their relative importance.

\section{Conclusion}
We introduce the Matryoshka Concept Bottleneck Models (MCBM), which imposes a nested, information-dense hierarchy on concepts via mRMR. Theoretically, we characterize intervention as a ``spectral race'', proving a reduction in human verification costs from linear to logarithmic scales. Empirically, MCBM achieves performance parity with independent baselines while enabling flexible, elastic deployment. Our work effectively bridges the gap between structural efficiency and semantic interpretability, establishing a new standard for scalable human-in-the-loop AI.

\bibliographystyle{abbrv}
\bibliography{main}

@InProceedings{koh2020concept,
  title = 	 {Concept Bottleneck Models},
  author =       {Koh, Pang Wei and Nguyen, Thao and Tang, Yew Siang and Mussmann, Stephen and Pierson, Emma and Kim, Been and Liang, Percy},
  booktitle = 	 {Proceedings of the 37th International Conference on Machine Learning},
  pages = 	 {5338--5348},
  year = 	 {2020},
  editor = 	 {III, Hal Daumé and Singh, Aarti},
  volume = 	 {119},
  series = 	 {Proceedings of Machine Learning Research},
  month = 	 {13--18 Jul},
  publisher =    {PMLR},
}

@inproceedings{kusupati2022matryoshka,
 author = {Kusupati, Aditya and Bhatt, Gantavya and Rege, Aniket and Wallingford, Matthew and Sinha, Aditya and Ramanujan, Vivek and Howard-Snyder, William and Chen, Kaifeng and Kakade, Sham and Jain, Prateek and Farhadi, Ali},
 booktitle = {Advances in Neural Information Processing Systems},
 editor = {S. Koyejo and S. Mohamed and A. Agarwal and D. Belgrave and K. Cho and A. Oh},
 pages = {30233--30249},
 publisher = {Curran Associates, Inc.},
 title = {Matryoshka Representation Learning},
 url = {https://proceedings.neurips.cc/paper_files/paper/2022/file/c32319f4868da7613d78af9993100e42-Paper-Conference.pdf},
 volume = {35},
 year = {2022}
}

@InProceedings{shin2023closer,
  title = 	 {A Closer Look at the Intervention Procedure of Concept Bottleneck Models},
  author =       {Shin, Sungbin and Jo, Yohan and Ahn, Sungsoo and Lee, Namhoon},
  booktitle = 	 {Proceedings of the 40th International Conference on Machine Learning},
  pages = 	 {31504--31520},
  year = 	 {2023},
  editor = 	 {Krause, Andreas and Brunskill, Emma and Cho, Kyunghyun and Engelhardt, Barbara and Sabato, Sivan and Scarlett, Jonathan},
  volume = 	 {202},
  series = 	 {Proceedings of Machine Learning Research},
  month = 	 {23--29 Jul},
  publisher =    {PMLR},
  url = 	 {https://proceedings.mlr.press/v202/shin23a.html},
}

@inproceedings{
oikarinen2023labelfree,
title={Label-free Concept Bottleneck Models},
author={Tuomas Oikarinen and Subhro Das and Lam M. Nguyen and Tsui-Wei Weng},
booktitle={The Eleventh International Conference on Learning Representations },
year={2023},
url={https://openreview.net/forum?id=FlCg47MNvBA}
}

@inproceedings{hueditable,
  title={Editable Concept Bottleneck Models},
  author={Hu, Lijie and Ren, Chenyang and Hu, Zhengyu and Lin, Hongbin and Wang, Cheng-Long and Tan, Zhen and Lyu, Weimin and Zhang, Jingfeng and Xiong, Hui and Wang, Di},
  booktitle={Forty-second International Conference on Machine Learning},
  year={2025}
}

@inproceedings{
yuksekgonul2022post,
title={Post-hoc Concept Bottleneck Models},
author={Mert Yuksekgonul and Maggie Wang and James Zou},
booktitle={The Eleventh International Conference on Learning Representations },
year={2023},
url={https://openreview.net/forum?id=nA5AZ8CEyow}
}

@techreport{wah2011caltech,
    title = {The Caltech-UCSD Birds-200-2011 Dataset},
    Author = {Wah, C. and Branson, S. and Welinder, P. and Perona, P. and Belongie, S.},
    Year = {2011},
    Institution = {California Institute of Technology},
    Number = {CNS-TR-2011-001},
}

@InProceedings{chowdhury2024adacbm,
    author = { Chowdhury, Townim F. and Phan, Vu Minh Hieu and Liao, Kewen and To, Minh-Son and Xie, Yutong and van den Hengel, Anton and Verjans, Johan W. and Liao, Zhibin},
    title = { { AdaCBM: An Adaptive Concept Bottleneck Model for Explainable and Accurate Diagnosis } },
    booktitle = {proceedings of Medical Image Computing and Computer Assisted Intervention -- MICCAI 2024},
    year = {2024},
    publisher = {Springer Nature Switzerland},
    volume = {LNCS 15010},
    month = {October},
    page = {35 -- 45}
}

@inproceedings{ahmad2018interpretable,
author = {Ahmad, Muhammad Aurangzeb and Eckert, Carly and Teredesai, Ankur},
title = {Interpretable Machine Learning in Healthcare},
year = {2018},
isbn = {9781450357944},
publisher = {Association for Computing Machinery},
address = {New York, NY, USA},
url = {https://doi.org/10.1145/3233547.3233667},
doi = {10.1145/3233547.3233667},
booktitle = {Proceedings of the 2018 ACM International Conference on Bioinformatics, Computational Biology, and Health Informatics},
pages = {559–560},
numpages = {2},
location = {Washington, DC, USA},
series = {BCB '18}
}

@InProceedings{yu2018artificial,
author = {Yu, Jiahui and Lin, Zhe and Yang, Jimei and Shen, Xiaohui and Lu, Xin and Huang, Thomas S.},
title = {Generative Image Inpainting With Contextual Attention},
booktitle = {Proceedings of the IEEE Conference on Computer Vision and Pattern Recognition (CVPR)},
month = {June},
year = {2018}
}

@article{daneshjou2021dermatologist,
   title={Disparities in dermatology AI performance on a diverse, curated clinical image set},
   volume={8},
   ISSN={2375-2548},
   url={http://dx.doi.org/10.1126/sciadv.abq6147},
   DOI={10.1126/sciadv.abq6147},
   number={32},
   journal={Science Advances},
   publisher={American Association for the Advancement of Science (AAAS)},
   author={Daneshjou, Roxana and Vodrahalli, Kailas and Novoa, Roberto A. and Jenkins, Melissa and Liang, Weixin and Rotemberg, Veronica and Ko, Justin and Swetter, Susan M. and Bailey, Elizabeth E. and Gevaert, Olivier and Mukherjee, Pritam and Phung, Michelle and Yekrang, Kiana and Fong, Bradley and Sahasrabudhe, Rachna and Allerup, Johan A. C. and Okata-Karigane, Utako and Zou, James and Chiou, Albert S.},
   year={2022},
   month={aug }
}

@inproceedings{havasi2022addressing,
author = {Havasi, Marton and Parbhoo, Sonali and Doshi-Velez, Finale},
title = {Addressing leakage in concept bottleneck models},
year = {2022},
isbn = {9781713871088},
publisher = {Curran Associates Inc.},
address = {Red Hook, NY, USA},
booktitle = {Proceedings of the 36th International Conference on Neural Information Processing Systems},
articleno = {1699},
numpages = {12},
location = {New Orleans, LA, USA},
series = {NIPS '22}
}

@inproceedings{zarlenga2023learning,
author = {Zarlenga, Mateo Espinosa and Collins, Katherine M. and Dvijotham, Krishnamurthy (Dj) and Weller, Adrian and Shams, Zohreh and Jamnik, Mateja},
title = {Learning to receive help: intervention-aware concept embedding models},
year = {2023},
publisher = {Curran Associates Inc.},
address = {Red Hook, NY, USA},
booktitle = {Proceedings of the 37th International Conference on Neural Information Processing Systems},
articleno = {1648},
numpages = {27},
location = {New Orleans, LA, USA},
series = {NIPS '23}
}

@inproceedings{antognini2021concept,
    title = "Rationalization through Concepts",
    author = "Antognini, Diego  and
      Faltings, Boi",
    editor = "Zong, Chengqing  and
      Xia, Fei  and
      Li, Wenjie  and
      Navigli, Roberto",
    booktitle = "Findings of the Association for Computational Linguistics: ACL-IJCNLP 2021",
    month = aug,
    year = "2021",
    address = "Online",
    publisher = "Association for Computational Linguistics",
    url = "https://aclanthology.org/2021.findings-acl.68/",
    doi = "10.18653/v1/2021.findings-acl.68",
    pages = "761--775"
}

@InProceedings{wong2021leveraging,
  title = 	 {Leveraging Sparse Linear Layers for Debuggable Deep Networks},
  author =       {Wong, Eric and Santurkar, Shibani and Madry, Aleksander},
  booktitle = 	 {Proceedings of the 38th International Conference on Machine Learning},
  pages = 	 {11205--11216},
  year = 	 {2021},
  editor = 	 {Meila, Marina and Zhang, Tong},
  volume = 	 {139},
  series = 	 {Proceedings of Machine Learning Research},
  month = 	 {18--24 Jul},
  publisher =    {PMLR},
  url = 	 {https://proceedings.mlr.press/v139/wong21b.html},
}

@INPROCEEDINGS{ding2005minimum,
  author={Ding, C. and Peng, H.},
  booktitle={Computational Systems Bioinformatics. CSB2003. Proceedings of the 2003 IEEE Bioinformatics Conference. CSB2003}, 
  title={Minimum redundancy feature selection from microarray gene expression data}, 
  year={2003},
  volume={},
  number={},
  pages={523-528},
  doi={10.1109/CSB.2003.1227396}}

@inproceedings{yang2023language,
  title={Language in a bottle: Language model guided concept bottlenecks for interpretable image classification},
  author={Yang, Yue and Panagopoulou, Artemis and Zhou, Shenghao and Jin, Daniel and Callison-Burch, Chris and Yatskar, Mark},
  booktitle={Proceedings of the IEEE/CVF Conference on Computer Vision and Pattern Recognition},
  pages={19187--19197},
  year={2023}
}

@misc{semenov2024sparse,
      title={Sparse Concept Bottleneck Models: Gumbel Tricks in Contrastive Learning}, 
      author={Andrei Semenov and Vladimir Ivanov and Aleksandr Beznosikov and Alexander Gasnikov},
      year={2024},
      eprint={2404.03323},
      archivePrefix={arXiv},
      primaryClass={cs.CV},
      url={https://arxiv.org/abs/2404.03323}, 
}

@ARTICLE{peng2005feature,
  author={Hanchuan Peng and Fuhui Long and Ding, C.},
  journal={IEEE Transactions on Pattern Analysis and Machine Intelligence}, 
  title={Feature selection based on mutual information criteria of max-dependency, max-relevance, and min-redundancy}, 
  year={2005},
  volume={27},
  number={8},
  pages={1226-1238},
  doi={10.1109/TPAMI.2005.159}}

@inproceedings{zhao2019lad,
  title={A Large-scale Attribute Dataset for Zero-shot Learning},
  author={Zhao, Bo and Fu, Yanwei and Liang, Rui and Wu, Jiahong and Wang, Yonggang and Wang, Yizhou},
  booktitle={Proceedings of the IEEE Conference on Computer Vision and Pattern Recognition Workshops},
  year={2019}
}

@inproceedings{liu2015celeba,
  title = {Deep Learning Face Attributes in the Wild},
  author = {Liu, Ziwei and Luo, Ping and Wang, Xiaogang and Tang, Xiaoou},
  booktitle = {Proceedings of International Conference on Computer Vision (ICCV)},
  month = {December},
  year = {2015} 
}

@article{lin2026controllable,
  title={Controllable Concept Bottleneck Models},
  author={Lin, Hongbin and Ren, Chenyang and Xu, Juangui and Hu, Zhengyu and Wang, Cheng-Long and Shu, Yao and Xiong, Hui and Zhang, Jingfeng and Wang, Di and Hu, Lijie},
  journal={arXiv preprint arXiv:2601.00451},
  year={2026}
}

@inproceedings{Chauhan2022,
author = {Chauhan, Kushal and Tiwari, Rishabh and Freyberg, Jan and Shenoy, Pradeep and Dvijotham, Krishnamurthy},
title = {Interactive concept bottleneck models},
year = {2023},
isbn = {978-1-57735-880-0},
publisher = {AAAI Press},
url = {https://doi.org/10.1609/aaai.v37i5.25736},
doi = {10.1609/aaai.v37i5.25736},
booktitle = {Proceedings of the Thirty-Seventh AAAI Conference on Artificial Intelligence and Thirty-Fifth Conference on Innovative Applications of Artificial Intelligence and Thirteenth Symposium on Educational Advances in Artificial Intelligence},
articleno = {667},
numpages = {8},
series = {AAAI'23/IAAI'23/EAAI'23}
}

@inproceedings{ECBM,
      title={Energy-Based Concept Bottleneck Models: Unifying Prediction, Concept Intervention, and Probabilistic Interpretations}, 
      author={Xu, Xinyue and Qin, Yi and Mi, Lu and Wang, Hao and Li, Xiaomeng},
      booktitle={International Conference on Learning Representations},
      year={2024}
}

@inproceedings{vandenhirtz2024stochastic,
  title={Stochastic Concept Bottleneck Models},
  author={Vandenhirtz, Moritz and Laguna, Sonia and Marcinkevi{\v{c}}s, Ri{\v{c}}ards and Vogt, Julia E.},
  booktitle={Advances in Neural Information Processing Systems},
  year={2024},
  url={https://arxiv.org/abs/2406.19272}
}

@article{defelice2026causal,
  title={Causally Reliable Concept Bottleneck Models},
  author={De Felice, Giovanni and Casanova Flores, Arianna and De Santis, Francesco and Santini, Silvia and Schneider, Johannes and Barbiero, Pietro and Termine, Alberto},
  journal={arXiv preprint arXiv:2503.04363},
  year={2026},
  url={https://arxiv.org/abs/2503.04363}
}

@inproceedings{bhan2025complete,
  title={Towards Achieving Concept Completeness for Textual Concept Bottleneck Models},
  author={Bhan, Milan and Choho, Yann and Vittaut, Jean-No{\"e}l and Chesneau, Nicolas and Moreau, Pierre and Lesot, Marie-Jeanne},
  booktitle={Findings of the Association for Computational Linguistics: EMNLP 2025},
  pages={2007--2024},
  year={2025},
  publisher={Association for Computational Linguistics},
  url={https://aclanthology.org/2025.findings-emnlp.106/}
}

@article{parisini2025leakage,
  title={Leakage and interpretability in concept-based models},
  author={Parisini, Edoardo and Chakraborti, Tathagata and Harbron, Chris and MacArthur, Ben D. and Banerji, Christopher R. S.},
  journal={arXiv preprint arXiv:2504.14094},
  year={2025},
  url={https://arxiv.org/abs/2504.14094}
}

@article{hu2024towards,
  title={Towards multi-dimensional explanation alignment for medical classification},
  author={Hu, Lijie and Lai, Songning and Chen, Wenshuo and Xiao, Hongru and Lin, Hongbin and Yu, Lu and Zhang, Jingfeng and Wang, Di},
  journal={Advances in Neural Information Processing Systems},
  volume={37},
  pages={129640--129671},
  year={2024}
}

@inproceedings{hu2025stable,
  title={Stable vision concept transformers for medical diagnosis},
  author={Hu, Lijie and Lai, Songning and Hua, Yuan and Yang, Shu and Zhang, Jingfeng and Wang, Di},
  booktitle={Joint European Conference on Machine Learning and Knowledge Discovery in Databases},
  pages={317--332},
  year={2025},
  organization={Springer}
}

@inproceedings{zhang2025locate,
  title={Locate-then-edit for Multi-hop Factual Recall under Knowledge Editing},
  author={Zhang, Zhuoran and Li, Yongxiang and Kan, Zijian and Cheng, Keyuan and Hu, Lijie and Wang, Di},
  booktitle={International Conference on Machine Learning},
  pages={75369--75391},
  year={2025},
  organization={PMLR}
}

@inproceedings{hu2025semi,
  title={Semi-supervised concept bottleneck models},
  author={Hu, Lijie and Huang, Tianhao and Xie, Huanyi and Gong, Xilin and Ren, Chenyang and Hu, Zhengyu and Yu, Lu and Ma, Ping and Wang, Di},
  booktitle={Proceedings of the IEEE/CVF International Conference on Computer Vision},
  pages={2110--2119},
  year={2025}
}

@inproceedings{lai2024faithful,
  title={Faithful vision-language interpretation via concept bottleneck models},
  author={Lai, Songning and Hu, Lijie and Wang, Junxiao and Berti-Equille, Laure and Wang, Di},
  booktitle={International Conference on Learning Representations},
  volume={2024},
  pages={23756--23779},
  year={2024}
}

@inproceedings{cheng2025compke,
  title={Compke: Complex question answering under knowledge editing},
  author={Cheng, Keyuan and Kan, Zijian and Zhang, Zhuoran and Ali, Muhammad Asif and Hu, Lijie and Wang, Di},
  booktitle={Findings of the Association for Computational Linguistics: ACL 2025},
  pages={2557--2576},
  year={2025}
}

@article{xu2025concept,
  title={Concept-based unsupervised domain adaptation},
  author={Xu, Xinyue and Hu, Yueying and Tang, Hui and Qin, Yi and Mi, Lu and Wang, Hao and Li, Xiaomeng},
  journal={arXiv preprint arXiv:2505.05195},
  year={2025}
}

@article{rafferty2026radiologist,
  title={Radiologist-Guided Causal Concept Bottleneck Models for Chest X-Ray Interpretation},
  author={Rafferty, Amy and Ramaesh, Rishi and Rajan, Ajitha},
  journal={arXiv preprint arXiv:2605.07785},
  year={2026}
}

@article{fokkema2026sample,
  title={Sample-efficient learning of concepts with theoretical guarantees: from data to concepts without interventions},
  author={Fokkema, Hidde and van Erven, Tim and Magliacane, Sara},
  journal={Advances in Neural Information Processing Systems},
  volume={38},
  pages={111783--111843},
  year={2026}
}

@article{kalampalikis2025towards,
  title={Towards reasonable concept bottleneck models},
  author={Kalampalikis, Nektarios and Gupta, Kavya and Vitanov, Georgi and Valera, Isabel},
  journal={arXiv preprint arXiv:2506.05014},
  year={2025}
}

@article{bussmann2025learning,
  title={Learning multi-level features with matryoshka sparse autoencoders},
  author={Bussmann, Bart and Nabeshima, Noa and Karvonen, Adam and Nanda, Neel},
  journal={arXiv preprint arXiv:2503.17547},
  year={2025}
}

@article{lai2026matryoshka,
  title={Matryoshka Representation Learning for Recommendation with Layer-and Hardness-Adaptive Negative Sampling},
  author={Lai, Riwei and Chen, Li and Chen, Weixin and Chen, Rui},
  journal={ACM Transactions on Intelligent Systems and Technology},
  volume={17},
  number={4},
  pages={1--25},
  year={2026},
  publisher={ACM New York, NY}
}


\appendix

\providecommand{\E}{\mathbb{E}}
\providecommand{\R}{\mathbb{R}}
\providecommand{\Prob}{\mathbb{P}}
\providecommand{\Order}[1]{\ensuremath{\Theta\!\left(#1\right)}}
\providecommand{\BigO}[1]{\ensuremath{\mathcal{O}\!\left(#1\right)}}
\providecommand{\Ind}{\mathbb{I}}

\numberwithin{equation}{section}


\section{Notation Table}
\label{app:nomenclature}

Table~\ref{tab:notation} summarizes the mathematical notations used throughout the paper.

\begin{table}[h]
\centering
\renewcommand{\arraystretch}{1.1}
\caption{Summary of Notations.}
\label{tab:notation}
\resizebox{0.7\linewidth}{!}{%
\scriptsize
\begin{tabular}{l p{7.5cm}}
\toprule
\textbf{Symbol} & \textbf{Description} \\
\midrule
\multicolumn{2}{l}{\textit{Inputs and Model Architecture}} \\
$\mathbf{x}, y$       & Input image and ground-truth target label \\
$K$                   & Total number of concepts \\
$C$                   & Number of target classes \\
$g_\theta(\cdot)$     & Feature extractor (Backbone) parameterized by $\theta$ \\
$\mathbf{l}$          & Raw, unordered concept logits vector, $\mathbf{l}\in\mathbb{R}^K$ \\
$\mathbf{c}_{GT}$     & Ground-truth concept binary vector \\
\midrule
\multicolumn{2}{l}{\textit{Matryoshka \& mRMR Optimization}} \\
$\Omega$                       & The set of all available concepts \\
$\mathcal{S}_t$                & The set of selected concepts at step $t$ (mRMR) \\
$I(X; Y)$                      & Mutual Information between variables $X$ and $Y$ \\
$\Pi(\cdot)$                   & Permutation module based on mRMR ranking \\
$\mathbf{c}_{\text{ord}}$      & Ordered concept vector after permutation \\
$\mathcal{D}$                  & Set of predefined nesting granularities (e.g., $\{8,16,\dots\}$) \\
$d$                            & Index for a specific nesting level / concept budget \\
$f_{\phi_d}(\cdot)$            & Prediction head for nesting level $d$ \\
$\mathbf{M}_d$                 & Binary mask for Efficient MCBM at level $d$ \\
$\mathcal{L}_{\text{total}}$   & Total joint optimization loss \\
\midrule
\multicolumn{2}{l}{\textit{Theoretical Analysis}} \\
$\ell^*$                  & Minimal sufficient intervention level for a specific instance \\
$k_i$                     & Number of concepts at geometric level $i$ \\
$r$                       & Geometric growth rate of concept set size ($r>1$) \\
$\gamma$                  & Geometric decay rate of information need ($\gamma\in(0,1)$) \\
$\rho$                    & Spectral ratio ($\rho=r\gamma$), determining efficiency regimes \\
$E$                       & Expected intervention cost \\
$P_e(k)$                  & Classification error probability after $k$ interventions \\
$\tilde{C}^{(k)}$         & Concept vector intervened up to index $k$ \\
$\epsilon$                & Intervention shift penalty (KL divergence) \\
\bottomrule
\end{tabular}%
}
\end{table}

\section{Proof of Intervention Efficiency Regimes}
\label{sec:appendix_regimes}

\noindent Building upon the intervention reduction established in the previous section, we now provide a rigorous asymptotic analysis of the expected intervention cost $E$ as a function of the total number of concepts $K$. While the previous section demonstrated that $E<K$, this section categorizes the exact scaling behavior of $E$. We show that the efficiency of the Matryoshka CBM is determined by a \emph{spectral race} between the growth of the concept sets (model capacity) and the decay of conditional entropy (information capture).

\subsection{Setup and Assumptions}

\noindent We analyze the behavior of the expected intervention cost $E=\sum_{i=1}^{L} k_i\cdot P(\ell^*=i)$ under the structural constraints imposed by the Matryoshka architecture.

\begin{assumption}[Geometric Concept Growth]
\label{ass:growth}
The Matryoshka nesting structure is constructed such that the size of the concept set at level $i$ grows geometrically with rate $r>1$:
\begin{equation}
k_i = k_1\cdot r^{i-1},\quad i=1,\dots,L.
\end{equation}
Given $k_L=K$, the number of levels scales logarithmically:
\begin{equation}
L=\log_r\!\left(\frac{K}{k_1}\right)+1=\Theta(\log K).
\end{equation}
\end{assumption}

\begin{assumption}[Geometric Probability Decay]
\label{ass:decay}
The probability mass function of the minimal sufficient level $\ell^*$ decays geometrically with rate $\gamma\in(0,1)$:
\begin{equation}
P(\ell^*=i)\le C\cdot \gamma^{i-1},
\end{equation}
where $C$ is a normalization constant ensuring $\sum P(\ell^*=i)=1$. The parameter $\gamma$ represents the \emph{information sparsity} of the task; a lower $\gamma$ implies that relevant information is highly concentrated in the first few dimensions.
\end{assumption}

\subsection{Derivation of Convergence Regimes}

\begin{theorem}[General Intervention Cost Bound]
\label{thm:general_bound}
Under Assumptions~\ref{ass:growth} and~\ref{ass:decay}, the expected intervention cost $E$ is bounded by a geometric series governed by the spectral ratio $\rho=r\gamma$:
\begin{equation}
E\le C k_1\sum_{i=1}^{L}(r\gamma)^{i-1}.
\label{eq:general_bound}
\end{equation}
\end{theorem}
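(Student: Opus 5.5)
The bound follows by substituting the two structural assumptions directly into the definition of the expected cost and comparing term by term. Start from the definition stated in the setup,
\begin{equation*}
E=\sum_{i=1}^{L} k_i\, P(\ell^*=i),
\end{equation*}
where $\ell^*$ is the minimal sufficient level. Under the lazy verification workflow, stopping at level $i$ costs $k_i$ inspected concepts, since the prefixes are nested and the level-$i$ prefix contains all earlier ones. Every summand is a product of nonnegative quantities, so any termwise upper bound on $P(\ell^*=i)$ yields an upper bound on $E$.

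The key steps, in order, are as follows.
\begin{enumerate}
\item Replace $k_i$ by $k_1 r^{i-1}$ using Assumption~\ref{ass:growth}. This is an equality, so no slack is introduced.
\item Replace $P(\ell^*=i)$ by $C\gamma^{i-1}$ using Assumption~\ref{ass:decay}. This is valid termwise because $k_i>0$.
\item Collect the factors: $k_1 r^{i-1}\cdot C\gamma^{i-1} = C k_1 (r\gamma)^{i-1}$.
\item Sum over $i=1,\dots,L$, which yields exactly \eqref{eq:general_bound} with $\rho=r\gamma$.
\end{enumerate}

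The only point needing care is the interpretation of $C$. Assumption~\ref{ass:decay} calls $C$ a normalization constant but states only an inequality. For the bound itself, all we need is that $C$ is a fixed positive constant independent of $i$ (and of $K$, for the later asymptotics). We do not need $C$ to make the bound tight. I would remark on this explicitly: if one insists that $\sum_i C\gamma^{i-1}\ge 1$, then $C\ge (1-\gamma)/(1-\gamma^L)$, which is bounded in $L$, so the constant remains $O(1)$.

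I expect no real obstacle here; the step that deserves a sentence of justification is the identification of the per-instance cost $N(x,y)$ with $k_{\ell^*}$. I would also note that summing to $L$ rather than to infinity is what keeps the bound finite when $\rho\ge 1$. This sets up Theorem~\ref{thm:efficiency_regimes}: for $\rho<1$ the sum is at most $1/(1-\rho)$; for $\rho=1$ it equals $L=\Theta(\log K)$; for $\rho>1$ it is $\Theta(\rho^{L})=\Theta(K^{\log_r \rho})=\Theta(K^{1+\log_r\gamma})$.
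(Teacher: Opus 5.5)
Your proposal is correct and matches the paper's proof: substitute $k_i=k_1r^{i-1}$ and the termwise bound $P(\ell^*=i)\le C\gamma^{i-1}$ into $E=\sum_{i=1}^{L}k_i\,P(\ell^*=i)$, then collect the factors into $Ck_1\sum_{i=1}^{L}(r\gamma)^{i-1}$. Your note that the summands are nonnegative, which makes the termwise replacement valid, is a small justification the paper leaves implicit.
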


\begin{proof}
Substituting the definitions of $k_i$ and $P(\ell^*=i)$ into the expectation:
\begin{equation}
\begin{aligned}
E &= \sum_{i=1}^{L} k_i\cdot P(\ell^*=i)
   \le \sum_{i=1}^{L}(k_1 r^{i-1})(C\gamma^{i-1}) \\
  &= C k_1\sum_{i=1}^{L}(r\gamma)^{i-1}.
\end{aligned}
\end{equation}
Let $\rho=r\gamma$. The summation $S_L=\sum_{j=0}^{L-1}\rho^j$ behaves differently depending on whether $\rho$ is below, equal to, or above $1$.
\end{proof}

\begin{remark}
The term $\rho=r\gamma$ represents the ratio between the \textbf{cost expansion rate} ($r$) and the \textbf{information gain rate} ($1/\gamma$). If $\rho<1$, information is gained faster than cost is added; if $\rho>1$, cost outpaces information.
\end{remark}

\begin{corollary}[Intervention Efficiency Regimes]
\label{cor:regimes}
The asymptotic scaling of $E$ in $K$ falls into one of three regimes:
\begin{enumerate}
    \item \textbf{Efficient regime} ($\gamma<1/r$): $E=\Theta(1)$.
    \item \textbf{Balanced regime} ($\gamma=1/r$): $E=\Theta(\log K)$.
    \item \textbf{Heavy-tailed regime} ($\gamma>1/r$): $E=\Theta(K^\alpha)$, where $\alpha=1+\log_r\gamma>0$.
\end{enumerate}
\end{corollary}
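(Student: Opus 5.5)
Starting from the general bound of Theorem~\ref{thm:general_bound}, $E\le Ck_1 S_L$ with $S_L=\sum_{j=0}^{L-1}\rho^j$ and $\rho=r\gamma$, we evaluate the geometric sum in each of the three cases. We then convert the dependence on $L$ into a dependence on $K$ using $L=\log_r(K/k_1)+1=\Theta(\log K)$ from Assumption~\ref{ass:growth}, treating $k_1$, $r$, $\gamma$, $C$ as constants independent of $K$.

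\textbf{Upper bounds.} If $\rho<1$, then $S_L\le 1/(1-\rho)$ uniformly in $L$, so $E=O(1)$. If $\rho=1$, then $S_L=L=\Theta(\log K)$, so $E=O(\log K)$. If $\rho>1$, then
\[
S_L=\frac{\rho^L-1}{\rho-1}=\Theta(\rho^{L-1}),
\]
and the key identity is
\[
\rho^{L-1}=(r\gamma)^{\log_r(K/k_1)}=\left(\frac{K}{k_1}\right)^{\log_r(r\gamma)}=\left(\frac{K}{k_1}\right)^{1+\log_r\gamma}.
\]
Hence $E=O(K^\alpha)$ with $\alpha=1+\log_r\gamma$, and $\alpha>0$ exactly when $\gamma>1/r$.

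\textbf{Matching lower bounds.} The main obstacle is the $\Theta$ claims, because Assumption~\ref{ass:decay} is only an inequality and by itself gives only upper bounds. In the efficient regime the lower bound is immediate: $E\ge k_1\sum_i P(\ell^*=i)=k_1$, since $k_i\ge k_1$. For the other two regimes we will read the decay assumption as tight, $P(\ell^*=i)=c\gamma^{i-1}$ up to constants; the phrase ``normalization constant'' suggests this is intended. Under that reading, $c=(1-\gamma)/(1-\gamma^L)\ge 1-\gamma$ is bounded away from zero, so the same computation gives $E\ge (1-\gamma)k_1 S_L$, which matches each upper bound.

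We will state this two-sided decay condition explicitly as the interpretation of Assumption~\ref{ass:decay} needed for the $\Theta$ statements. We will also note that under the one-sided inequality alone, only the $O(\cdot)$ bounds hold in the balanced and heavy-tailed regimes.
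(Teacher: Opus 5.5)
Your proposal follows the paper's own argument: bound $E$ by $Ck_1 S_L$, evaluate the geometric sum in the three cases $\rho<1$, $\rho=1$, $\rho>1$, and convert $L$ to $K$ via Assumption~\ref{ass:growth}. Your identity $\rho^{L-1}=(K/k_1)^{1+\log_r\gamma}$ is the paper's $r^L\propto K$, $\gamma^L\propto K^{\log_r\gamma}$ step written exactly.

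The one real difference is in your favor. The paper derives only upper bounds and then asserts $\Theta$ by writing ``$E\approx$'' and ``$E\propto$''. You correctly observe that the one-sided Assumption~\ref{ass:decay} cannot give the lower bounds in the balanced and heavy-tailed regimes: for instance, $P(\ell^*=1)=1$ satisfies it with $C=1$ and gives $E=k_1$. Your explicit two-sided reading, with $c=(1-\gamma)/(1-\gamma^L)\in[1-\gamma,1]$, is exactly what those $\Theta$ claims need.
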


\begin{proof}
We analyze $S_L=\sum_{i=1}^{L}\rho^{i-1}$ from~\eqref{eq:general_bound} for each case.

\textbf{Case 1: $\rho<1$.} The geometric series converges:
\begin{equation}
S_L=\frac{1-\rho^L}{1-\rho}<\frac{1}{1-\rho},\qquad
E\le \frac{C k_1}{1-r\gamma}=\Theta(1).
\end{equation}

\textbf{Case 2: $\rho=1$.} Every term equals $1$, so $S_L=L$, and using Assumption~\ref{ass:growth}:
\begin{equation}
E\approx C k_1 L=C k_1\!\left(\log_r\frac{K}{k_1}+1\right)=\Theta(\log K).
\end{equation}

\textbf{Case 3: $\rho>1$.} The series diverges geometrically:
\begin{equation}
S_L=\frac{\rho^L-1}{\rho-1}\approx\frac{\rho^L}{\rho-1}=\Theta(\rho^L).
\end{equation}
Since $r^L\propto K$ and $\gamma^L=(K/k_1)^{\log_r\gamma}\propto K^{\log_r\gamma}$, we get
\begin{equation}
E\propto K\cdot K^{\log_r\gamma}=K^{1+\log_r\gamma}.
\end{equation}
With $\gamma>1/r$ we have $\log_r\gamma>-1$, so $\alpha=1+\log_r\gamma>0$.
\end{proof}

\begin{remark}
These regimes give a concrete target for Matryoshka Representation Learning:
\begin{itemize}
    \item \textbf{Regime 1 ($\Theta(1)$):} the ideal state---a \emph{critical mass} of discriminative information has been compressed into the first $k_1$ dimensions, and the system is fully scalable in $K$.
    \item \textbf{Regime 2 ($\Theta(\log K)$):} value of an extra concept just offsets its verification cost.
    \item \textbf{Regime 3 ($\Theta(K^\alpha)$):} information bottleneck failure; the nested structure offers little gain over a flat baseline.
\end{itemize}
\end{remark}

\section{Proof of Intervention Reduction}
\label{sec:proof_reduction}

\noindent We provide a formal proof that intervening on concepts in a Matryoshka CBM strictly reduces the upper bound on classification error, using the Hellman--Raviv inequality.

\subsection{Preliminaries and Definitions}

\noindent Let data be sampled from $P$ over $\mathcal{X}\times\mathcal{Y}\times\{0,1\}^K$, where $X\in\mathcal{X}$, $Y\in[C]$, and $C^*\in\{0,1\}^K$ is the ground-truth concept vector. The MCBM consists of:
\begin{enumerate}
\item A concept encoder $g:\mathcal{X}\to\mathbb{R}^K$ with $\hat{C}=\sigma(g(X))$.
\item A label predictor $f:\mathbb{R}^K\to\Delta^{C-1}$.
\end{enumerate}

\begin{definition}[Intervention Vector]
Let $\tilde{C}^{(k)}$ denote the concept vector after intervening on the first $k$ concepts under the Matryoshka ordering $\mathcal{M}$:
\begin{equation}
\tilde{C}^{(k)}=[c^*_1,\dots,c^*_k,\hat{c}_{k+1},\dots,\hat{c}_K].
\end{equation}
\end{definition}

\begin{definition}[Intervention Error]
\begin{equation}
P_e(k)=\Prob(f(\tilde{C}^{(k)})\neq Y)=\E\!\left[\Ind(f(\tilde{C}^{(k)})\neq Y)\right].
\end{equation}
\end{definition}

\subsection{Error Bound Derivation}

\begin{lemma}[Hellman--Raviv Inequality]
\label{lemma:hellman_raviv}
For any Bayesian classifier predicting $Y$ from observation $Z$, the error probability satisfies (in nats)
\begin{equation}
P_e\le \tfrac{1}{2}H(Y\mid Z).
\end{equation}
\end{lemma}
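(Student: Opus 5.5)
The plan is to reduce the statement to a pointwise inequality on a single posterior and then average over $Z$. For each realization $z$, let $p=p(\cdot\mid z)$ be the posterior on $[C]$. The Bayesian classifier outputs $\arg\max_y p(y)$, so its conditional error is $1-\max_y p(y)$. Hence
\begin{equation*}
P_e=\E_Z\!\left[1-\max_y p(y\mid Z)\right],\qquad H(Y\mid Z)=\E_Z\!\left[H(p(\cdot\mid Z))\right].
\end{equation*}
It therefore suffices to prove the scalar inequality $1-\max_y p(y)\le \tfrac12 H(p)$ for every distribution $p$, and then take expectations. No concavity step is needed, since both sides are plain expectations.

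For the pointwise inequality, I would follow the Feder--Merhav route and fix $m=\max_y p(y)$. First I would characterise $\phi(m)=\min\{H(p):\max_y p(y)=m\}$. The minimiser places mass $m$ on as many atoms as possible and puts the remainder on one further atom: for $m\in[1/(k+1),1/k]$ it is $(m,\dots,m,1-km)$ with $k$ copies of $m$. I would justify this by a majorisation/Schur-concavity argument: entropy is Schur-concave, and this vector majorises every feasible vector with maximum $m$. Writing $P=1-m$ for the error, we get $\phi=\log k$ at the breakpoints $P=1-1/k$. On each interval between consecutive breakpoints, $\phi$ is concave in $P$, because it is the sum of a linear term and a binary-entropy-type term. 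A concave function on an interval lies above its chord. It then suffices to check the linear target $2P$ (the factor $\tfrac12$ inverted) at the breakpoints, namely $\log k\ge 2(1-1/k)$ for all $k\ge1$. This holds with equality at $k=1$ and $k=2$ and strictly for $k\ge3$. Chord domination then gives $\phi(P)\ge 2P$ on every interval, and hence $H(p)\ge 2(1-\max_y p(y))$.

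The main obstacle is exactly this breakpoint check, together with the choice of logarithm base, and I would flag it explicitly. The equality at $k=2$ ($\log 2 = 1$) holds only when the entropy unit is the bit. With natural logarithms, the uniform binary posterior gives $1-\max p=\tfrac12$ but $\tfrac12 H=\tfrac12\ln 2<\tfrac12$. So the inequality as worded cannot be established with $\ln$ and the constant $\tfrac12$. I would therefore carry out the argument with $\log_2$, which is the classical Hellman--Raviv setting, and record that in nats the same proof yields $P_e\le \tfrac{1}{2\ln 2}H(Y\mid Z)$. Downstream, in Theorem~\ref{thm:error_bound_main}, $H(Y\mid\tilde C^{(k)})=H(Y)-I(Y;\tilde C^{(k)})$ must be read in the same unit for the stated bound to hold.
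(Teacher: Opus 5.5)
The paper gives no proof of this lemma. It quotes the Hellman--Raviv inequality as a known fact and attaches the unit label ``in nats''. Your diagnosis of that label is correct, and it is the central point. The classical constant $\tfrac12$ is for entropy in bits. The uniform binary posterior gives $P_e=\tfrac12>\tfrac12\ln 2$, so the lemma as literally stated is false. The correct nats form is $P_e\le\tfrac{1}{2\ln 2}H(Y\mid Z)$.

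Your proof of the bits version is sound:
\begin{itemize}
\item The reduction to $1-\max_y p(y)\le\tfrac12 H(p)$ is exact, because both $P_e$ and $H(Y\mid Z)$ are expectations over $Z$.
\item The vector $(m,\dots,m,1-km)$ does majorise every distribution whose entries are all at most $m$.
\item On each cell the minimal entropy equals $h(q)+q\log k$ with $q=k(1-P)$, so it is concave in $P$, as you claim.
\item The breakpoint check $\log_2 k\ge 2(1-1/k)$, with equality at $k=1,2$, closes the argument via chord domination.
\end{itemize}

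Compared with the paper's bare citation, your route is self-contained and gives more. It recovers the exact Feder--Merhav minimal-entropy envelope, of which $H\ge 2P_e$ is only the linear minorant. The tightness at $k=2$ is exactly what pins down the unit.

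If you carry the correction downstream, note that Pinsker's inequality in the form $\sqrt{\epsilon/2}$ requires the KL divergence in nats. So the corrected bound in Theorem~\ref{thm:error_bound} must do one of two things:
\begin{itemize}
\item mix units, with the entropic term in bits and $\epsilon$ in nats; or
\item keep everything in nats and replace the leading $\tfrac12$ by $\tfrac{1}{2\ln 2}$.
\end{itemize}
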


\begin{remark}
Reducing the conditional entropy $H(Y\mid Z)$ strictly lowers the ceiling on the error.
\end{remark}

\begin{theorem}[Upper Bound on Intervention Error]
\label{thm:error_bound}
Let $\epsilon=D_{KL}(P_{int}\,\|\,P_{train})$ be the intervention shift penalty. The probability of error after $k$ interventions satisfies
\begin{equation}
P_e(k)\le \tfrac{1}{2}\bigl(H(Y)-I(Y;\tilde{C}^{(k)})\bigr)+\sqrt{\tfrac{\epsilon}{2}}.
\label{eq:error_bound}
\end{equation}
\end{theorem}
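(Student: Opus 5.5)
The plan is to split the error of the deployed predictor $f$ on the intervened vector into an idealized Bayes-error term, controlled by Lemma~\ref{lemma:hellman_raviv}, and a distribution-shift term, controlled by Pinsker's inequality. The Bayes term produces the entropy expression in \eqref{eq:error_bound}; the shift term produces the $\sqrt{\epsilon/2}$ penalty.

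First, the idealized term. Apply Lemma~\ref{lemma:hellman_raviv} with observation $Z=\tilde{C}^{(k)}$. The Bayes-optimal classifier for $Y$ given $\tilde{C}^{(k)}$ under the training law $P_{train}$ then has error at most $\tfrac12 H(Y\mid \tilde{C}^{(k)})$. Rewriting with the identity $H(Y\mid Z)=H(Y)-I(Y;Z)$ gives the first term of \eqref{eq:error_bound}. Following the Practical Non-Idealities remark, we take $f$ to be (or to be identified with) this Bayesian classifier under $P_{train}$. Any approximation gap is treated as absorbed into $\epsilon$, so the argument does not need to quantify it separately.

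Second, the shift term. The quantity $P_e(k)$ is actually evaluated under $P_{int}$, where hard ground-truth concepts $c^*_1,\dots,c^*_k$ replace the soft predictions seen in training. Write $P_e(k)=\E_{P_{int}}[\Ind(f(\tilde{C}^{(k)})\neq Y)]$. Because the integrand is an indicator with values in $[0,1]$, the two expectations satisfy
\[
\bigl|\E_{P_{int}}[\Ind]-\E_{P_{train}}[\Ind]\bigr|\le \mathrm{TV}(P_{int},P_{train}).
\]
Pinsker's inequality bounds this total variation by $\sqrt{D_{KL}(P_{int}\|P_{train})/2}=\sqrt{\epsilon/2}$. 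Adding this to the bound from the first step yields \eqref{eq:error_bound}.

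The main obstacle is consistency of the information quantity across the two laws. The mutual information $I(Y;\tilde{C}^{(k)})$ in the statement must be computed under the same law that the Hellman--Raviv step uses. Since the Bayes step is carried out under $P_{train}$, I would state explicitly that $I(Y;\tilde{C}^{(k)})$ is taken under the training-type joint law of $(Y,\tilde{C}^{(k)})$, and that the whole distributional mismatch is charged to $\epsilon$. The alternative is to work under $P_{int}$ for both steps, but then the Bayes classifier of $P_{int}$ is not the $f$ that was trained, and that route breaks down.

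A secondary point is checking the Hellman--Raviv constant for multiclass $Y$ in nats. The factor $\tfrac12$ is valid in that setting, since the binary-entropy bound $\min(p,1-p)\le \tfrac12 h(p)$ extends through the conditional-entropy averaging. I would cite this rather than reprove it.
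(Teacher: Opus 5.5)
Your route is the paper's: Hellman--Raviv on $\tilde{C}^{(k)}$, then the identity $H(Y\mid Z)=H(Y)-I(Y;Z)$, then Pinsker for the shift term. Your bookkeeping is tighter than the paper's. The paper applies Lemma~\ref{lemma:hellman_raviv} directly to $P_e(k)$, which is an error of $f$ under $P_{int}$, and then adds the total-variation gap without fixing the law of the entropic term. Your order is $\mathrm{err}_{P_{int}}(f)\le\mathrm{err}_{P_{train}}(f)+\delta\le\tfrac12 H_{P_{train}}(Y\mid\tilde{C}^{(k)})+\delta$, with $\delta$ the total variation between joint laws of (input, $Y$), and that is what makes the steps compose.

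Your choice to take the mutual information under $P_{train}$ is also necessary, not just convenient. Under $P_{int}$ the inequality is false. Take a single binary input $Z$ (other coordinates constant) and $Y$ uniform. Let $Z=Y$ under $P_{int}$, and let $Z=1-Y$ with probability $\tfrac12+s$ under $P_{train}$. Then $f(z)=1-z$, so $P_e(k)=1$. Meanwhile $H(Y)-I_{P_{int}}(Y;Z)=0$ and $\epsilon=\log\frac{1}{1/2-s}\to\log 2$, so the right-hand side tends to $\sqrt{(\log 2)/2}<1$ in either unit. Working under $P_{int}$ only yields $\mathrm{err}_{P_{int}}(f)\le\mathrm{err}_{P_{train}}(g^*)+\delta\le\mathrm{err}_{P_{int}}(g^*)+2\delta$ for the Bayes rule $g^*$ of $P_{int}$, i.e.\ a doubled penalty. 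The price of your convention is that if $P_{train}$ is literally the law of $(\hat{C},Y)$, the entropic term no longer depends on $k$.

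Also drop the claim that a non-Bayes $f$ is ``absorbed into $\epsilon$''. $\epsilon$ depends only on the two data laws. With $P_{int}=P_{train}$, $Y=Z$ and $f(z)=1-z$, the bound would assert $1\le 0$. Bayes-optimality of $f$ under $P_{train}$ must therefore be stated as an explicit hypothesis.

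The step that actually fails is your secondary point. The constant $\tfrac12$ in Hellman--Raviv is valid for entropy in \emph{bits}, not nats. In nats, $\min(p,1-p)\le\tfrac12 h(p)$ already fails at $p=\tfrac12$, since $\tfrac12>\tfrac12\ln 2\approx 0.347$. Correspondingly, if $Y$ is uniform binary and independent of $\tilde{C}^{(k)}$ and $P_{int}=P_{train}$, then $P_e(k)=\tfrac12$ against a right-hand side of $\tfrac12\ln 2$. The paper's Lemma~\ref{lemma:hellman_raviv} makes the same slip, and its mutual-information tables are in nats (e.g.\ relevance $0.6931=\ln 2$), so do not cite the lemma in that form.

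The repair is to measure $H$, $I$ and $\epsilon$ in bits:
\begin{itemize}
\item Hellman--Raviv then holds with $\tfrac12$. For multiclass $Y$ this goes through the pointwise bound $1-\max_i p_i\le\tfrac12 H(p)$, using $H(p)\ge h_2(p_{\max})$ when $p_{\max}\ge\tfrac12$ and $H(p)\ge\log_2(1/p_{\max})$ otherwise. It is not a mere averaging of the binary inequality.
\item Pinsker's $\sqrt{\epsilon/2}$ remains valid, because KL in bits is at least KL in nats.
\end{itemize}
Equivalently, keep nats and replace $\tfrac12$ by $\tfrac{1}{2\ln 2}$.
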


\begin{proof}
\textbf{Step 1.} By Lemma~\ref{lemma:hellman_raviv},
\begin{equation}
P_e(k)\le \tfrac{1}{2} H(Y\mid \tilde{C}^{(k)}).
\end{equation}
\textbf{Step 2.} Using $I(Y;Z)=H(Y)-H(Y\mid Z)$,
\begin{equation}
H(Y\mid \tilde{C}^{(k)})=H(Y)-I(Y;\tilde{C}^{(k)}),
\end{equation}
yielding the entropic part of the bound.
\textbf{Step 3.} The classifier $f$ is trained under $P_{train}$ but evaluated under $P_{int}$. By Pinsker's inequality, the total-variation gap is bounded by
\begin{equation}
\delta(P_{int},P_{train})\le \sqrt{\tfrac{1}{2}D_{KL}(P_{int}\,\|\,P_{train})}=\sqrt{\tfrac{\epsilon}{2}},
\end{equation}
which, when added to the previous bound, gives~\eqref{eq:error_bound}.
\end{proof}

\begin{remark}
Equation~\eqref{eq:error_bound} mathematically justifies the Matryoshka objective: minimizing $P_e(k)$ requires maximizing $I(Y;\tilde{C}^{(k)})$ for \emph{every} prefix $k\in\mathcal{M}$, which standard CBM does only for $k=K$.
\end{remark}

\begin{corollary}[Monotonic Error Reduction]
\label{cor:monotonic}
Since $I(Y;\tilde{C}^{(k+1)})\ge I(Y;\tilde{C}^{(k)})$, the upper bound on the classification error is non-increasing in the intervention depth $k$.
\end{corollary}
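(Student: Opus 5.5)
The plan is to reduce the corollary to a data-processing argument. The right-hand side of Theorem~\ref{thm:error_bound} is $\tfrac{1}{2}\bigl(H(Y)-I(Y;\tilde{C}^{(k)})\bigr)+\sqrt{\epsilon/2}$. Here $H(Y)$ does not depend on $k$. If the shift penalty $\epsilon$ is either a uniform constant over intervention depths, or non-increasing in $k$, then monotonicity of the bound follows from the single inequality $I(Y;\tilde{C}^{(k+1)})\ge I(Y;\tilde{C}^{(k)})$. I would state the condition on $\epsilon$ explicitly, reading $\epsilon$ as a uniform bound $\sup_k D_{KL}$ as in the statement, so that the only substantive work is the mutual-information inequality.

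The mutual-information inequality is not automatic. Going from $\tilde{C}^{(k)}$ to $\tilde{C}^{(k+1)}$ is not a refinement of the observation: coordinate $k+1$ is \emph{replaced}, since $\hat{c}_{k+1}$ is swapped for $c^*_{k+1}$. So I would introduce a no-leakage, noisy-channel model for the concept encoder. Concretely, assume that conditionally on $C^*$, the predictions $\hat{c}_1,\dots,\hat{c}_K$ are independent of $Y$ and of each other, and each $\hat{c}_j$ depends on $C^*$ only through $c^*_j$ via a channel $Q_j(\hat{c}_j\mid c^*_j)$. This is exactly the idealization that Remark (Practical Non-Idealities) says is absorbed into $\epsilon$ when it fails.

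Under this model, $\tilde{C}^{(k)}$ can be generated from $\tilde{C}^{(k+1)}$ by a randomized map that uses no information about $Y$. The map keeps every coordinate except the $(k+1)$-th, reads $c^*_{k+1}$ from that coordinate, draws a fresh $\hat{c}_{k+1}\sim Q_{k+1}(\cdot\mid c^*_{k+1})$, and writes it in. The key check is that the joint law of $(Y,\tilde{C}^{(k)})$ produced this way matches the true one. Factorising the joint law of $(Y,C^*,\hat{C})$ under the model confirms this: the drawn $\hat{c}_{k+1}$ has the right conditional law given $(Y,C^*,\hat{c}_{k+2},\dots,\hat{c}_K)$. This gives the Markov chain $Y\to\tilde{C}^{(k+1)}\to\tilde{C}^{(k)}$. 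The data-processing inequality then yields $I(Y;\tilde{C}^{(k)})\le I(Y;\tilde{C}^{(k+1)})$, equivalently $H(Y\mid\tilde{C}^{(k+1)})\le H(Y\mid\tilde{C}^{(k)})$. Substituting into Eq.~\eqref{eq:error_bound} and inducting over $k$ gives a non-increasing bound.

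The main obstacle is this Markov-chain step. Verifying the factorisation needs the conditional independence of $\hat{c}_{k+1}$ from $(Y,\hat{c}_{-(k+1)})$ given $c^*_{k+1}$. Without this assumption the inequality can genuinely fail: a leaky soft concept may carry more label information than its hard ground-truth value. I would therefore make the assumption an explicit hypothesis of the corollary, rather than try to derive it from the architecture.
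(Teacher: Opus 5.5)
Your closing step is the paper's entire argument. The corollary has no separate proof: its content is that in Eq.~\eqref{eq:error_bound} the terms $H(Y)$ and $\epsilon$ do not depend on $k$, so a non-decreasing $I(Y;\tilde{C}^{(k)})$ gives a non-increasing bound.

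The difference is that the paper takes $I(Y;\tilde{C}^{(k+1)})\ge I(Y;\tilde{C}^{(k)})$ as a bare premise (the ``Since'' clause), while you prove it. Your reason for proving it is sound. Passing from $\tilde{C}^{(k)}$ to $\tilde{C}^{(k+1)}$ replaces $\hat c_{k+1}$ by $c^*_{k+1}$ rather than adding a coordinate, and the inequality can fail outright. Take $K=1$, $k=0$, $\hat c_1$ an injective function of $Y$, and $c^*_1$ independent of $Y$; then $I(Y;\tilde{C}^{(0)})=H(Y)>0=I(Y;\tilde{C}^{(1)})$.

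Your garbling construction is correct. Under the product-channel model, resampling $\hat c_{k+1}\sim Q_{k+1}(\cdot\mid c^*_{k+1})$ from $\tilde{C}^{(k+1)}$ reproduces the joint law of $(Y,\tilde{C}^{(k)})$, and data processing gives the inequality. Your hypothesis is stronger than necessary: it suffices that $\hat c_{k+1}$ be conditionally independent of $Y$ given $(c^*_1,\dots,c^*_{k+1},\hat c_{k+2},\dots,\hat c_K)$, and you then resample from that conditional law instead.

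You are also right to state that $\epsilon$ must be uniform (or non-increasing) in $k$. The intervention law $P_{int}$ clearly depends on $k$, even though the paper's notation hides this.

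One small caution: if your no-leakage hypothesis fails, that failure is not ``absorbed into $\epsilon$'' here. The $\sqrt{\epsilon/2}$ term cannot offset a drop in $I(Y;\tilde{C}^{(k)})$, so the hypothesis must stand as an explicit assumption, as you conclude.

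In short, the paper's route is brief but rests on an assertion that is false in general; yours gives an actual proof under explicit, interpretable conditions.
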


\begin{remark}
This guarantees that any-time intervention is safe: stopping at depth $k$ never makes the theoretical bound worse than at depth $k-1$.
\end{remark}

\section{Supplementary Rebuttal Experiments}
\label{sec:app_rebuttal_experiments}

\subsection{CUB Baselines and Official Split Protocol}
\label{sec:app_cub_baselines}

Table~\ref{tab:app_cub_baselines} expands the CUB baseline comparison in the main paper. Sparse CBM is reproduced on the official CUB split because the original random split can leak test images into model selection. Label-free CBM obtains strong task accuracy, but its CLIP-aligned concepts are not guaranteed to match the human-annotated CUB concept vocabulary; therefore, they are less reliable for expert intervention than the fully grounded concepts used by MCBM.

\begin{table}[h]
\centering
\caption{\textbf{Comparison with CBM baselines on CUB under the official split.}}
\label{tab:app_cub_baselines}
\resizebox{0.82\linewidth}{!}{%
\scriptsize
\setlength{\tabcolsep}{4pt}
\begin{tabular}{lcl}
\toprule
\textbf{Method} & \textbf{Test Acc. (\%)} & \textbf{Remark} \\
\midrule
Sparse CBM & 53.75 & Reproduced on the official CUB split \\
Sequential CBM & 69.87 & Standard CBM baseline \\
Independent CBM & 70.38 & Standard CBM baseline \\
Label-free CBM~\cite{oikarinen2023labelfree} & 74.06 & Semantically ambiguous for intervention \\
MCBM (Standard, $K=112$) & 73.20 & Ours \\
Efficient MCBM ($K=112$) & 72.94 & Ours \\
\bottomrule
\end{tabular}%
}
\end{table}

\subsection{Concept Accuracy and Leakage Analysis}
\label{sec:app_leakage}

Jointly trained CBMs can leak task information through concepts that no longer behave as human-interpretable attributes~\cite{parisini2025leakage}. We check this in two ways. First, the $X\!\to\!C$ predictor remains accurate on the top mRMR prefixes: 92.44\% at $K=8$, 93.14\% at $K=16$, 93.80\% at $K=32$, and 96.27\% at $K=112$. Second, we train a strictly Sequential MCBM in which the concept extractor is trained before the Matryoshka task heads, removing gradient paths from the task loss into the concept representation. Even under this zero-leakage protocol, the model reaches 72.09\% task accuracy at $K=16$ and 73.75\% at $K=112$, showing that low-budget recovery is driven primarily by semantic compression through the mRMR order.

We also compare a flat joint CBM trained only with $K=16$ concepts against the matched $d=16$ Matryoshka head. The flat joint model reaches 65.84\% accuracy, whereas the matched Matryoshka head reaches 63.24\%. This gap is expected: the flat model has more freedom to distort its small concept set for task prediction, while nested supervision from larger heads regularizes MCBM toward a shared semantic ordering.

\subsection{Efficient MCBM Training Strategy}
\label{sec:app_efficient_training}

The default Efficient MCBM iterates over all nesting levels in each batch. To quantify the cost of this dense supervision, we compare it with a random-level variant that samples one granularity per batch. Random-level training is about $2.35\times$ faster, but Table~\ref{tab:app_random_level} shows that it damages the lowest budgets, where dense gradients are most important for enforcing the Matryoshka prior.

\begin{table}[h]
\centering
\caption{\textbf{Sensitivity of Efficient MCBM to training strategy on CUB.} Negative deltas indicate degradation from all-level training to random-level sampling.}
\label{tab:app_random_level}
\resizebox{0.9\linewidth}{!}{%
\scriptsize
\setlength{\tabcolsep}{4pt}
\begin{tabular}{c|ccc|ccc}
\toprule
\multirow{2}{*}{$K$} & \multicolumn{3}{c|}{\textbf{Accuracy (\%)}} & \multicolumn{3}{c}{\textbf{Macro F1 (\%)}} \\
 & All Levels & Random Level & $\Delta$ & All Levels & Random Level & $\Delta$ \\
\midrule
8   & 55.49 & 25.70 & -29.79 & 45.05 & 15.95 & -29.10 \\
16  & 88.57 & 70.69 & -17.88 & 84.85 & 62.95 & -21.90 \\
32  & 98.46 & 96.93 & -1.54  & 98.09 & 96.08 & -2.01 \\
64  & 99.50 & 99.50 & 0.00   & 99.34 & 99.34 & 0.00 \\
112 & 100.00 & 99.50 & -0.50 & 100.00 & 99.34 & -0.66 \\
\bottomrule
\end{tabular}%
}
\end{table}

\subsection{mRMR Stability and Weight-Based Ordering}
\label{sec:app_mrmr_stability}

To test whether the static hierarchy is brittle under redundant concepts, we rerun mRMR on CUB with five random seeds and compute the Intersection over Union of the top prefixes. The selected top-$K$ sets for $K=8,16,24,32$ have IoU $=1.00$ across all runs, indicating that the high-value prefix is stable even if long-tail concepts can permute.

We further compare mRMR against a task-weight ordering extracted from a fully trained independent $K=112$ CBM head. Ranking concepts by $\sum_c |W_{c,i}|$ yields Spearman correlation $0.934$ and Kendall's $\tau=0.792$ with mRMR ($p<10^{-30}$), and gives a near-identical intervention recovery curve. Thus, mRMR anticipates the same task-relevance hierarchy that a full predictor learns, without requiring a separate full-capacity CBM to be trained first.

\subsection{Matched-Head Versus Cross-Head Intervention}
\label{sec:app_cross_head}

One alternative is to intervene on the first $k$ concepts but still run the full $K=112$ classifier. This mixes $k$ hard expert labels with $K-k$ soft predicted logits, producing a distribution shift. Table~\ref{tab:app_cross_head} compares matched Matryoshka heads against the full head under this cross-head protocol.

\begin{table}[h]
\centering
\caption{\textbf{Cross-head intervention on CUB.} Matched heads are trained for the exact corrected prefix size, while the full head receives a mixture of corrected and uncorrected concepts.}
\label{tab:app_cross_head}
\resizebox{0.6\linewidth}{!}{%
\scriptsize
\setlength{\tabcolsep}{4pt}
\begin{tabular}{c|cc|c}
\toprule
\textbf{Corrected $k$} & \textbf{Matched Head Acc.} & \textbf{Full Head Acc.} & \textbf{Drop} \\
\midrule
16 & 88.71 & 83.03 & -5.68 \\
32 & 98.46 & 91.92 & -6.54 \\
\bottomrule
\end{tabular}%
}
\end{table}

\subsection{Empirical Geometric Decay}
\label{sec:app_geometric_decay}

For each initially misclassified CUB sample, we record the minimal sufficient intervention level $\ell^*$ at which the prediction becomes correct. Figure~\ref{fig:app_geometric_decay} plots the cumulative recovery curve, marginal gain, and fitted distribution of $\ell^*$. The fitted exponential trend has $\gamma=0.9636$ with $R^2=0.4808$. The moderate $R^2$ reflects local long-tail spikes from coupled fine-grained attributes, but the macro trend still concentrates mass on early interventions, supporting the geometric-decay assumption used in Theorem~\ref{thm:efficiency_regimes}.

\begin{figure}[h]
    \centering
    \includegraphics[width=0.72\linewidth]{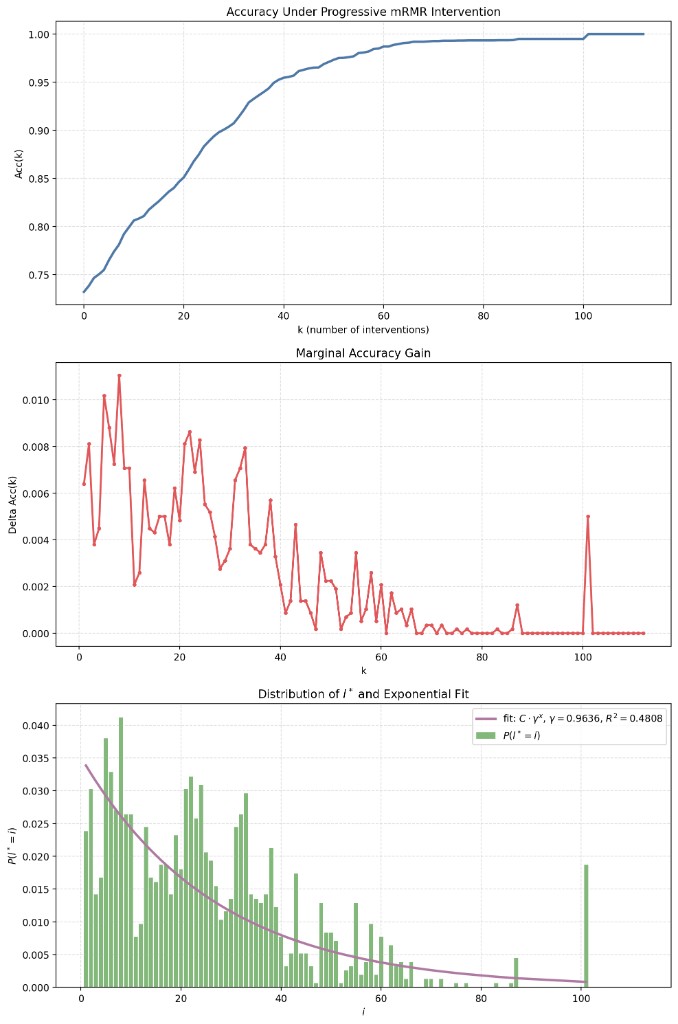}
    \caption{\textbf{Empirical geometric decay on CUB.} Accuracy recovers rapidly under progressive mRMR intervention; marginal gains decay overall, and the stopping-level distribution follows an exponential trend.}
    \label{fig:app_geometric_decay}
\end{figure}

\subsection{Causal CBM Comparison}
\label{sec:app_causal_comparison}

Causal and stochastic CBMs are complementary to MCBM. A causal graph can propagate a corrected concept to dependent concepts, improving inter-concept consistency. However, reading a graph backward from the task node identifies a Markov blanket, not an ordered subset optimized for a fixed budget. If multiple parents are redundant, verifying all of them wastes scarce expert attention. mRMR directly optimizes the prefix for maximum relevance and minimum redundancy.

The deployment distinction is also important. Dynamic graph-based methods generally keep the full concept vocabulary, graph structure, and propagation machinery active at inference time. They can mask or prioritize concepts logically, but the full model remains in memory. MCBM turns the ranking into an architectural prefix: when the budget is $k$, suffix concepts and unmatched heads can be removed, giving predictable physical truncation for edge or clinical deployments.

\section{Limitations}
\label{sec:limitations}

MCBM inherits several limitations from concept bottleneck modeling. First, its interpretability and intervention value depend on the quality and completeness of the concept vocabulary: missing, biased, or ambiguous concepts can limit both accuracy and human trust. Second, the theoretical efficiency guarantees rely on geometric information concentration in the ordered prefixes; our CUB analysis supports this trend empirically, but tasks with highly diffuse or weakly predictive concepts may fall into the heavier-tailed regimes described in Section~\ref{sec:theory}. Third, although we evaluate multiple datasets and backbones, the experiments focus on visual attribute benchmarks, so further validation is needed before deployment in clinical, scientific, or other high-stakes domains. Finally, physical truncation reduces memory and intervention cost, but it does not remove the need for domain-specific auditing, calibration, fairness evaluation, and expert oversight when the model is used in real decision pipelines.

\section{Broader Impact}
\label{sec:broader_impact}

MCBM can lower the cost of expert oversight by allowing practitioners to inspect a small, ordered prefix of human-understandable concepts before deciding whether further intervention is needed. This may improve accessibility of interpretable models in resource-constrained or high-stakes settings. At the same time, cheaper intervention does not remove the need for dataset and concept-set auditing: biased concept annotations, incomplete concept vocabularies, or over-reliance on imperfect concept predictions could still lead to unfair or unsafe decisions. We therefore view MCBM as a tool for reducing verification burden, not as a replacement for domain expert review, fairness evaluation, or deployment-specific risk assessment.

\section{Licenses and Existing Assets}
\label{sec:licenses_assets}

This paper and our original accompanying materials are released under the CC-BY 4.0 license. We use only public research assets cited in the main paper, including CUB-200-2011, LAD, CelebA, ImageNet-pretrained backbones, and prior CBM baselines, and we respect their original licenses and terms of use. Third-party datasets, pretrained weights, and baseline implementations remain governed by their original creators' licenses and are not redistributed as new assets.

\section{mRMR Concept Ranking for CUB Dataset}
\label{sec:cub_ranking}

This section presents the complete ranked list of 112 concepts for the CUB-200-2011 dataset, ordered by their mRMR score. The ranking is computed using the Minimum Redundancy Maximum Relevance algorithm as described in Section~\ref{sec:method}. Table~\ref{tab:cub_mrmr_ranking} details the Rank, Concept Name, mRMR Score ($I(c_i;y)-\text{Redundancy}$), Relevance ($I(c_i;y)$), and Redundancy ($\frac{1}{|\mathcal{S}|}\sum I(c_i;c_j)$) for each attribute.

\begin{longtable}{c l c c c}
\caption{\textbf{Full mRMR Concept Ranking for CUB Dataset.} Attributes are ordered by decreasing information density.}\label{tab:cub_mrmr_ranking}\\
\toprule
\textbf{Rank} & \textbf{Concept} & \textbf{Score} & \textbf{Relevance} & \textbf{Redundancy} \\
\midrule
\endfirsthead
\multicolumn{5}{c}{{\bfseries \tablename\ \thetable{} -- continued from previous page}} \\
\toprule
\textbf{Rank} & \textbf{Concept} & \textbf{Score} & \textbf{Relevance} & \textbf{Redundancy} \\
\midrule
\endhead
\midrule
\multicolumn{5}{r}Continued on next page \\
\bottomrule
\endfoot
\bottomrule
\endlastfoot

1 & has\_shape::perching-like & 0.6931 & 0.6931 & 0.0000 \\
2 & has\_bill\_color::orange & 0.6884 & 0.6930 & 0.0046 \\
3 & has\_underparts\_color::black & 0.6769 & 0.6830 & 0.0061 \\
4 & has\_upperparts\_color::orange & 0.6726 & 0.6760 & 0.0035 \\
5 & has\_belly\_color::buff & 0.6739 & 0.6823 & 0.0084 \\
6 & has\_bill\_shape::spatulate & 0.6615 & 0.6762 & 0.0146 \\
7 & has\_back\_pattern::multi-colored & 0.6580 & 0.6830 & 0.0249 \\
8 & has\_size::large\_(16\_-\_32\_in) & 0.6433 & 0.6577 & 0.0144 \\
9 & has\_under\_tail\_color::orange & 0.6330 & 0.6679 & 0.0349 \\
10 & has\_underparts\_color::buff & 0.6310 & 0.6493 & 0.0182 \\
11 & has\_shape::swallow-like & 0.6350 & 0.6733 & 0.0383 \\
12 & has\_belly\_color::black & 0.6307 & 0.6764 & 0.0457 \\
13 & has\_wing\_color::orange & 0.6274 & 0.6764 & 0.0490 \\
14 & has\_bill\_length::longer\_than\_head & 0.6189 & 0.6596 & 0.0407 \\
15 & has\_throat\_color::black & 0.6107 & 0.6580 & 0.0472 \\
16 & has\_forehead\_color::orange & 0.6078 & 0.6394 & 0.0316 \\
17 & has\_crown\_color::orange & 0.5853 & 0.6360 & 0.0508 \\
18 & has\_breast\_color::black & 0.5881 & 0.6584 & 0.0703 \\
19 & has\_eye\_color::buff & 0.5884 & 0.6420 & 0.0536 \\
20 & has\_upperparts\_color::rufous & 0.5796 & 0.5834 & 0.0037 \\
21 & has\_tail\_shape::rounded\_tail & 0.5785 & 0.6028 & 0.0242 \\
22 & has\_leg\_color::orange & 0.5786 & 0.5909 & 0.0123 \\
23 & has\_upper\_tail\_color::orange & 0.5660 & 0.6129 & 0.0469 \\
24 & has\_crown\_color::buff & 0.5534 & 0.5805 & 0.0271 \\
25 & has\_back\_color::rufous & 0.5510 & 0.5696 & 0.0185 \\
26 & has\_tail\_pattern::multi-colored & 0.5481 & 0.5654 & 0.0173 \\
27 & has\_wing\_color::blue & 0.5355 & 0.5578 & 0.0223 \\
28 & has\_primary\_color::orange & 0.5335 & 0.5906 & 0.0572 \\
29 & has\_bill\_shape::all-purpose & 0.5227 & 0.5556 & 0.0329 \\
30 & has\_back\_color::orange & 0.5232 & 0.5807 & 0.0576 \\
31 & has\_leg\_color::rufous & 0.5215 & 0.5275 & 0.0060 \\
32 & has\_wing\_pattern::striped & 0.5202 & 0.5292 & 0.0091 \\
33 & has\_wing\_color::rufous & 0.5184 & 0.5399 & 0.0215 \\
34 & has\_head\_pattern::eyering & 0.5093 & 0.5230 & 0.0137 \\
35 & has\_upperparts\_color::blue & 0.5071 & 0.5411 & 0.0340 \\
36 & has\_primary\_color::rufous & 0.4985 & 0.5228 & 0.0243 \\
37 & has\_back\_color::blue & 0.4958 & 0.5352 & 0.0394 \\
38 & has\_wing\_color::black & 0.4924 & 0.5056 & 0.0132 \\
39 & has\_nape\_color::orange & 0.4866 & 0.5351 & 0.0485 \\
40 & has\_wing\_pattern::spotted & 0.4774 & 0.5009 & 0.0235 \\
41 & has\_primary\_color::black & 0.4740 & 0.5052 & 0.0313 \\
42 & has\_throat\_color::orange & 0.4635 & 0.4983 & 0.0349 \\
43 & has\_upper\_tail\_color::rufous & 0.4568 & 0.4740 & 0.0172 \\
44 & has\_primary\_color::blue & 0.4489 & 0.4866 & 0.0377 \\
45 & has\_nape\_color::black & 0.4487 & 0.4785 & 0.0298 \\
46 & has\_size::very\_large\_(32\_-\_72\_in) & 0.4433 & 0.4764 & 0.0331 \\
47 & has\_upperparts\_color::black & 0.4395 & 0.4641 & 0.0246 \\
48 & has\_nape\_color::rufous & 0.4341 & 0.4491 & 0.0150 \\
49 & has\_wing\_shape::rounded-wings & 0.4193 & 0.4236 & 0.0043 \\
50 & has\_upper\_tail\_color::blue & 0.4192 & 0.4572 & 0.0380 \\
51 & has\_breast\_color::orange & 0.4139 & 0.4441 & 0.0302 \\
52 & has\_under\_tail\_color::black & 0.4109 & 0.4373 & 0.0263 \\
53 & has\_under\_tail\_color::rufous & 0.4075 & 0.4244 & 0.0169 \\
54 & has\_belly\_color::grey & 0.4024 & 0.4173 & 0.0150 \\
55 & has\_underparts\_color::red & 0.4004 & 0.4156 & 0.0151 \\
56 & has\_underparts\_color::orange & 0.3960 & 0.4303 & 0.0342 \\
57 & has\_breast\_color::grey & 0.3956 & 0.4166 & 0.0210 \\
58 & has\_under\_tail\_color::blue & 0.3924 & 0.4250 & 0.0326 \\
59 & has\_underparts\_color::grey & 0.3827 & 0.4083 & 0.0256 \\
60 & has\_crown\_color::blue & 0.3810 & 0.4150 & 0.0339 \\
61 & has\_tail\_pattern::striped & 0.3792 & 0.3887 & 0.0095 \\
62 & has\_bill\_color::rufous & 0.3714 & 0.3800 & 0.0086 \\
63 & has\_forehead\_color::blue & 0.3675 & 0.4058 & 0.0383 \\
64 & has\_wing\_color::red & 0.3607 & 0.3878 & 0.0270 \\
65 & has\_nape\_color::blue & 0.3567 & 0.3853 & 0.0286 \\
66 & has\_upperparts\_color::red & 0.3535 & 0.3783 & 0.0248 \\
67 & has\_primary\_color::grey & 0.3527 & 0.3798 & 0.0271 \\
68 & has\_breast\_color::red & 0.3509 & 0.3700 & 0.0191 \\
69 & has\_forehead\_color::rufous & 0.3469 & 0.3586 & 0.0117 \\
70 & has\_back\_pattern::spotted & 0.3460 & 0.3685 & 0.0225 \\
71 & has\_upper\_tail\_color::black & 0.3443 & 0.3658 & 0.0216 \\
72 & has\_belly\_color::orange & 0.3441 & 0.3752 & 0.0311 \\
73 & has\_crown\_color::rufous & 0.3440 & 0.3586 & 0.0147 \\
74 & has\_back\_color::black & 0.3412 & 0.3658 & 0.0246 \\
75 & has\_belly\_color::red & 0.3381 & 0.3585 & 0.0203 \\
76 & has\_leg\_color::red & 0.3308 & 0.3459 & 0.0150 \\
77 & has\_back\_color::red & 0.3219 & 0.3476 & 0.0258 \\
78 & has\_throat\_color::grey & 0.3194 & 0.3382 & 0.0188 \\
79 & has\_underparts\_color::rufous & 0.3122 & 0.3199 & 0.0076 \\
80 & has\_size::medium\_(9\_-\_16\_in) & 0.3106 & 0.3168 & 0.0062 \\
81 & has\_primary\_color::red & 0.2896 & 0.3053 & 0.0157 \\
82 & has\_breast\_color::rufous & 0.2882 & 0.2964 & 0.0082 \\
83 & has\_bill\_shape::curved\_(up\_or\_down) & 0.2797 & 0.2920 & 0.0123 \\
84 & has\_breast\_pattern::striped & 0.2771 & 0.2817 & 0.0045 \\
85 & has\_belly\_color::rufous & 0.2757 & 0.2866 & 0.0109 \\
86 & has\_nape\_color::red & 0.2759 & 0.2914 & 0.0155 \\
87 & has\_under\_tail\_color::red & 0.2701 & 0.2904 & 0.0203 \\
88 & has\_upperparts\_color::grey & 0.2695 & 0.2817 & 0.0122 \\
89 & has\_forehead\_color::black & 0.2682 & 0.2888 & 0.0206 \\
90 & has\_upper\_tail\_color::red & 0.2677 & 0.2904 & 0.0227 \\
91 & has\_forehead\_color::grey & 0.2569 & 0.2683 & 0.0113 \\
92 & has\_breast\_pattern::spotted & 0.2487 & 0.2632 & 0.0144 \\
93 & has\_back\_pattern::striped & 0.2466 & 0.2536 & 0.0070 \\
94 & has\_crown\_color::black & 0.2460 & 0.2638 & 0.0178 \\
95 & has\_tail\_pattern::spotted & 0.2328 & 0.2429 & 0.0101 \\
96 & has\_breast\_color::blue & 0.2261 & 0.2423 & 0.0162 \\
97 & has\_throat\_color::red & 0.2214 & 0.2400 & 0.0186 \\
98 & has\_wing\_color::grey & 0.2179 & 0.2292 & 0.0113 \\
99 & has\_nape\_color::grey & 0.2142 & 0.2285 & 0.0144 \\
100 & has\_underparts\_color::blue & 0.2135 & 0.2285 & 0.0150 \\
101 & has\_bill\_shape::needle & 0.2133 & 0.2248 & 0.0115 \\
102 & has\_bill\_color::red & 0.2032 & 0.2124 & 0.0092 \\
103 & has\_crown\_color::grey & 0.2018 & 0.2141 & 0.0123 \\
104 & has\_belly\_color::blue & 0.1975 & 0.2117 & 0.0142 \\
105 & has\_bill\_length::shorter\_than\_head & 0.1938 & 0.1994 & 0.0056 \\
106 & has\_shape::long-legged-like & 0.1920 & 0.1977 & 0.0057 \\
107 & has\_bill\_color::buff & 0.1920 & 0.1994 & 0.0074 \\
108 & has\_head\_pattern::unique\_pattern & 0.1917 & 0.1984 & 0.0067 \\
109 & has\_throat\_color::rufous & 0.1910 & 0.1977 & 0.0066 \\
110 & has\_wing\_pattern::solid & 0.1896 & 0.1967 & 0.0071 \\
111 & has\_back\_color::grey & 0.1862 & 0.2009 & 0.0147 \\
112 & has\_eye\_color::orange & 0.1523 & 0.1588 & 0.0065 \\
\end{longtable}

\section{mRMR Concept Ranking for CelebA Dataset}
\label{sec:celeba_ranking}

This section presents the complete ranked list of 40 binary attributes for the CelebA dataset. As shown in the experiments, the top-ranked concepts (e.g., ``Attractive", ``Young", ``Heavy\_Makeup") carry the vast majority of the discriminative information for identity-related tasks, explaining the rapid performance saturation observed at $K=8$.

\begin{longtable}{c l c c c}
\caption{\textbf{Full mRMR Concept Ranking for CelebA Dataset.} Attributes are ordered by decreasing information density.}\\
\label{tab:celeba_mrmr_ranking} \\
\toprule
\textbf{Rank} & \textbf{Concept} & \textbf{Score} & \textbf{Relevance} & \textbf{Redundancy} \\
\midrule
\endfirsthead

\multicolumn{5}{c}%
{{\bfseries \tablename\ \thetable{} -- continued from previous page}} \\
\toprule
\textbf{Rank} & \textbf{Concept} & \textbf{Score} & \textbf{Relevance} & \textbf{Redundancy} \\
\midrule
\endhead

\midrule
\multicolumn{5}{r}{{Continued on next page}} \\
\bottomrule
\endfoot

\bottomrule
\endlastfoot

1 & Attractive & 0.6928 & 0.6928 & 0.0000 \\
2 & 5\_o\_Clock\_Shadow & 0.0000 & 0.0020 & 0.0020 \\
3 & Young & 0.0403 & 0.0808 & 0.0404 \\
4 & Heavy\_Makeup & 0.0501 & 0.1205 & 0.0704 \\
5 & Chubby & 0.0128 & 0.0344 & 0.0217 \\
6 & Blurry & 0.0117 & 0.0182 & 0.0065 \\
7 & Eyeglasses & 0.0142 & 0.0283 & 0.0141 \\
8 & Wearing\_Lipstick & 0.0246 & 0.1204 & 0.0958 \\
9 & Pointy\_Nose & 0.0121 & 0.0266 & 0.0145 \\
10 & Big\_Nose & 0.0126 & 0.0395 & 0.0269 \\
11 & Oval\_Face & 0.0114 & 0.0191 & 0.0077 \\
12 & Gray\_Hair & 0.0115 & 0.0252 & 0.0137 \\
13 & Receding\_Hairline & 0.0086 & 0.0168 & 0.0082 \\
14 & Double\_Chin & 0.0086 & 0.0266 & 0.0179 \\
15 & Arched\_Eyebrows & 0.0095 & 0.0323 & 0.0228 \\
16 & Wavy\_Hair & 0.0089 & 0.0235 & 0.0146 \\
17 & Wearing\_Hat & 0.0059 & 0.0102 & 0.0043 \\
18 & Bald & 0.0062 & 0.0132 & 0.0070 \\
19 & Male & 0.0090 & 0.0800 & 0.0710 \\
20 & Smiling & 0.0057 & 0.0110 & 0.0053 \\
21 & Brown\_Hair & 0.0041 & 0.0088 & 0.0048 \\
22 & Rosy\_Cheeks & 0.0032 & 0.0146 & 0.0113 \\
23 & Mustache & 0.0029 & 0.0106 & 0.0077 \\
24 & Pale\_Skin & 0.0028 & 0.0038 & 0.0010 \\
25 & Narrow\_Eyes & 0.0018 & 0.0027 & 0.0009 \\
26 & Bags\_Under\_Eyes & 0.0020 & 0.0161 & 0.0141 \\
27 & Blond\_Hair & 0.0020 & 0.0123 & 0.0103 \\
28 & Goatee & 0.0009 & 0.0113 & 0.0103 \\
29 & Wearing\_Necktie & 0.0005 & 0.0128 & 0.0123 \\
30 & Bangs & -0.0019 & 0.0018 & 0.0037 \\
31 & Big\_Lips & -0.0020 & 0.0020 & 0.0040 \\
32 & Straight\_Hair & -0.0025 & 0.0009 & 0.0033 \\
33 & Bushy\_Eyebrows & -0.0035 & 0.0009 & 0.0044 \\
34 & Sideburns & -0.0040 & 0.0051 & 0.0091 \\
35 & Wearing\_Necklace & -0.0041 & 0.0024 & 0.0065 \\
36 & High\_Cheekbones & -0.0041 & 0.0112 & 0.0153 \\
37 & Wearing\_Earrings & -0.0049 & 0.0078 & 0.0127 \\
38 & Black\_Hair & -0.0051 & 0.0000 & 0.0051 \\
39 & No\_Beard & -0.0076 & 0.0200 & 0.0276 \\
40 & Mouth\_Slightly\_Open & -0.0080 & 0.0002 & 0.0082 \\
\end{longtable}

\section{mRMR Concept Ranking for LAD Dataset}
\label{sec:lad_ranking}

This section presents the complete ranked list of 359 concepts for the Large-scale Attribute Dataset (LAD), ordered by their mRMR score. The hierarchy reflects the ``Threshold Effect" observed in our experiments, where early concepts capture broad, structural attributes (e.g., ``material", ``state"), while later concepts refine specific details.

\begin{longtable}{c l c c c}
\caption{\textbf{Full mRMR Concept Ranking for LAD Dataset.} Attributes are ordered by decreasing information density.}\\
\label{tab:lad_mrmr_ranking} \\
\toprule
\textbf{Rank} & \textbf{Concept} & \textbf{Score} & \textbf{Relevance} & \textbf{Redundancy} \\
\midrule
\endfirsthead

\multicolumn{5}{c}%
{{\bfseries \tablename\ \thetable{} -- continued from previous page}} \\
\toprule
\textbf{Rank} & \textbf{Concept} & \textbf{Score} & \textbf{Relevance} & \textbf{Redundancy} \\
\midrule
\endhead

\midrule
\multicolumn{5}{r}{{Continued on next page}} \\
\bottomrule
\endfoot

\bottomrule
\endlastfoot

1 & current state: is complete & 0.5517 & 0.5517 & 0.0000 \\
2 & material: is made of metal & 0.4689 & 0.5473 & 0.0784 \\
3 & material: is made of metal & 0.4538 & 0.5228 & 0.0691 \\
4 & other body parts: has eyes & 0.3893 & 0.4355 & 0.0462 \\
5 & taste: tastes sweet & 0.3802 & 0.5449 & 0.1647 \\
6 & aim: is for family & 0.3860 & 0.5410 & 0.1550 \\
7 & price: is expensive & 0.3871 & 0.5195 & 0.1323 \\
8 & edibility: is common & 0.3719 & 0.5383 & 0.1664 \\
9 & aim: is for display & 0.3674 & 0.5225 & 0.1551 \\
10 & function: can be driven & 0.3632 & 0.5174 & 0.1542 \\
11 & growth: grows on trees & 0.3683 & 0.5070 & 0.1386 \\
12 & safety: is safe & 0.3662 & 0.5218 & 0.1556 \\
13 & edibility: has a high water content & 0.3631 & 0.5294 & 0.1663 \\
14 & material: is made of plastic & 0.3617 & 0.5003 & 0.1386 \\
15 & aim: is for business & 0.3601 & 0.5422 & 0.1821 \\
16 & epidermis: has peel & 0.3558 & 0.4970 & 0.1411 \\
17 & function: can move & 0.3577 & 0.5181 & 0.1604 \\
18 & aim: is for personal & 0.3552 & 0.5410 & 0.1858 \\
19 & current state: is raw & 0.3581 & 0.5458 & 0.1876 \\
20 & other body parts: has a nose & 0.3530 & 0.4088 & 0.0558 \\
21 & parts: has seats & 0.3570 & 0.5075 & 0.1505 \\
22 & material: is made of plastic & 0.3579 & 0.5473 & 0.1894 \\
23 & habit: is active & 0.3443 & 0.4073 & 0.0630 \\
24 & existence: is common & 0.3485 & 0.5468 & 0.1983 \\
25 & parts: has a engine & 0.3518 & 0.4923 & 0.1405 \\
26 & hardness: is soft & 0.3523 & 0.4600 & 0.1077 \\
27 & epidermis: is smooth & 0.3473 & 0.4566 & 0.1093 \\
28 & shape: is long & 0.3486 & 0.4837 & 0.1350 \\
29 & existence: is movable & 0.3501 & 0.4803 & 0.1302 \\
30 & size: is small (compared to apples) & 0.3474 & 0.4486 & 0.1011 \\
31 & parts: has lights & 0.3483 & 0.4899 & 0.1416 \\
32 & edibility: can be eaten directly & 0.3453 & 0.4470 & 0.1017 \\
33 & aim: is for office use & 0.3479 & 0.4698 & 0.1219 \\
34 & function: can carry a small number (\u226410) of passengers & 0.3422 & 0.4479 & 0.1057 \\
35 & size: is big (compared to a mobile phone) & 0.3424 & 0.4437 & 0.1013 \\
36 & fitness: fits people with a pointed nose & 0.3422 & 0.3721 & 0.0299 \\
37 & other body parts: has a backbone & 0.3415 & 0.4009 & 0.0594 \\
38 & aim: is for civil use & 0.3424 & 0.4374 & 0.0950 \\
39 & power: is a low-power (\u2264100 w) device & 0.3412 & 0.4424 & 0.1012 \\
40 & weight: weighs tons & 0.3421 & 0.4565 & 0.1144 \\
41 & appearance: has soft skin & 0.3382 & 0.3911 & 0.0529 \\
42 & speed: moves fast & 0.3404 & 0.4613 & 0.1209 \\
43 & sound: is quiet & 0.3394 & 0.4238 & 0.0844 \\
44 & edibility: has seeds & 0.3401 & 0.4066 & 0.0665 \\
45 & shape: is globular & 0.3393 & 0.4161 & 0.0769 \\
46 & parts: has indicator lights & 0.3377 & 0.4217 & 0.0839 \\
47 & other body parts: has a tongue & 0.3326 & 0.3931 & 0.0605 \\
48 & parts: has doors & 0.3343 & 0.4382 & 0.1040 \\
49 & taste: tastes sour & 0.3307 & 0.4060 & 0.0753 \\
50 & parts: has a horn & 0.3321 & 0.4297 & 0.0976 \\
51 & parts: has keys & 0.3325 & 0.4137 & 0.0812 \\
52 & fitness: fits fair-skinned people & 0.3311 & 0.3638 & 0.0327 \\
53 & habit: moves fast & 0.3315 & 0.3868 & 0.0553 \\
54 & color: is black & 0.3320 & 0.4191 & 0.0871 \\
55 & parts: has windows & 0.3332 & 0.4355 & 0.1023 \\
56 & edibility: needs to be skinned & 0.3295 & 0.3813 & 0.0519 \\
57 & habit: has muscle & 0.3286 & 0.3927 & 0.0641 \\
58 & size: is big (compared to cars) & 0.3292 & 0.4146 & 0.0854 \\
59 & edibility: has nutlets & 0.3251 & 0.3809 & 0.0557 \\
60 & parts: has a brake & 0.3248 & 0.4081 & 0.0832 \\
61 & feeling: is lively & 0.3216 & 0.3549 & 0.0333 \\
62 & other body parts: has ears & 0.3206 & 0.3849 & 0.0643 \\
63 & parts: has a plug & 0.3207 & 0.3791 & 0.0584 \\
64 & habit: is friendly & 0.3189 & 0.3685 & 0.0497 \\
65 & usage scenarios: can be used on urban roads & 0.3190 & 0.3963 & 0.0772 \\
66 & function: can give out lights & 0.3189 & 0.3835 & 0.0646 \\
67 & safety: is safe & 0.3166 & 0.3929 & 0.0763 \\
68 & habitat: lives on the ground & 0.3149 & 0.3655 & 0.0507 \\
69 & fitness: fits people with small eyes & 0.3150 & 0.3502 & 0.0352 \\
70 & habit: lives in groups & 0.3128 & 0.3566 & 0.0438 \\
71 & habit: is quiet & 0.3104 & 0.3533 & 0.0429 \\
72 & outside color: is yellow & 0.3105 & 0.3510 & 0.0406 \\
73 & size: is small (compared to pigs) & 0.3095 & 0.3543 & 0.0448 \\
74 & habit: is warm-blooded & 0.3049 & 0.3696 & 0.0647 \\
75 & growth: is tropical & 0.3038 & 0.3423 & 0.0385 \\
76 & habit: is weak & 0.3003 & 0.3472 & 0.0469 \\
77 & diet: eats meat & 0.2991 & 0.3514 & 0.0524 \\
78 & weight: weighs kilograms & 0.2994 & 0.3501 & 0.0507 \\
79 & parts: has a steering wheel & 0.2999 & 0.3659 & 0.0660 \\
80 & function: can carry a large quantity ($>1$ tons) of goods & 0.2955 & 0.3505 & 0.0550 \\
81 & medicinal property: is cool or cold & 0.2949 & 0.3303 & 0.0354 \\
82 & hardness: is hard & 0.2944 & 0.3240 & 0.0297 \\
83 & usage scenarios: can be used on rural roads & 0.2932 & 0.3548 & 0.0616 \\
84 & fitness: fits people with ear rings & 0.2922 & 0.3212 & 0.0290 \\
85 & neck: has a long neck & 0.2900 & 0.3466 & 0.0566 \\
86 & power: consumes diesel oil & 0.2884 & 0.3482 & 0.0598 \\
87 & behaviour: can walk & 0.2869 & 0.3400 & 0.0531 \\
88 & smell: is fragrant & 0.2860 & 0.3200 & 0.0340 \\
89 & fitness: fits people with heavy make-up & 0.2861 & 0.3170 & 0.0308 \\
90 & diet: eats plants & 0.2865 & 0.3348 & 0.0484 \\
91 & color: is gray & 0.2832 & 0.3217 & 0.0385 \\
92 & habit: is timid & 0.2822 & 0.3230 & 0.0407 \\
93 & paws: has paws & 0.2815 & 0.3395 & 0.0581 \\
94 & outside color: is red & 0.2809 & 0.3128 & 0.0319 \\
95 & medicinal property: is warm or hot & 0.2766 & 0.3069 & 0.0303 \\
96 & shape: is long & 0.2752 & 0.3020 & 0.0268 \\
97 & teeth: has teeth & 0.2751 & 0.3235 & 0.0484 \\
98 & diet: eats leaves & 0.2723 & 0.3210 & 0.0487 \\
99 & parts: has a number plate & 0.2725 & 0.3257 & 0.0532 \\
100 & feeling: is sexy & 0.2690 & 0.2951 & 0.0261 \\
101 & epidermis: is rough & 0.2647 & 0.2886 & 0.0239 \\
102 & aim: is for entertainment & 0.2637 & 0.2963 & 0.0326 \\
103 & behaviour: can swim & 0.2634 & 0.3082 & 0.0448 \\
104 & safety: is dangerous & 0.2633 & 0.2905 & 0.0271 \\
105 & feeling: is pure & 0.2631 & 0.2870 & 0.0239 \\
106 & function: can carry a small quantity (\u2264 1 ton) of goods & 0.2612 & 0.2927 & 0.0315 \\
107 & shape: is ellipsoidal & 0.2600 & 0.2837 & 0.0237 \\
108 & habit: can milk & 0.2590 & 0.3070 & 0.0480 \\
109 & feeling: is elegant & 0.2583 & 0.2848 & 0.0265 \\
110 & diet: eats seeds & 0.2567 & 0.3023 & 0.0455 \\
111 & parts: has four wheels & 0.2570 & 0.3004 & 0.0434 \\
112 & material: is made of glass & 0.2569 & 0.2844 & 0.0275 \\
113 & parts: has a motor & 0.2566 & 0.2855 & 0.0289 \\
114 & limb: has four legs & 0.2559 & 0.3055 & 0.0496 \\
115 & feeling: is simple & 0.2557 & 0.2800 & 0.0243 \\
116 & appearance: is smooth & 0.2547 & 0.2869 & 0.0321 \\
117 & feeling: is cute & 0.2453 & 0.2703 & 0.0249 \\
118 & limb: has short legs & 0.2427 & 0.2803 & 0.0376 \\
119 & shape: is flat & 0.2418 & 0.2648 & 0.0230 \\
120 & color: is black & 0.2416 & 0.2691 & 0.0276 \\
121 & tail: has a long tail & 0.2397 & 0.2726 & 0.0328 \\
122 & aim: is for family & 0.2384 & 0.2685 & 0.0301 \\
123 & parts: has an audio & 0.2378 & 0.2608 & 0.0229 \\
124 & existence: is fixed & 0.2351 & 0.2539 & 0.0189 \\
125 & sound: is noisy & 0.2330 & 0.2567 & 0.0237 \\
126 & diet: eats nectar & 0.2277 & 0.2575 & 0.0298 \\
127 & diet: eats insects & 0.2250 & 0.2564 & 0.0314 \\
128 & color: is white & 0.2250 & 0.2504 & 0.0253 \\
129 & color: is gray & 0.2238 & 0.2568 & 0.0330 \\
130 & usage mode: is handheld & 0.2235 & 0.2412 & 0.0177 \\
131 & function: can transmit signal & 0.2227 & 0.2447 & 0.0220 \\
132 & behaviour: can jump & 0.2219 & 0.2573 & 0.0354 \\
133 & color: is gray & 0.2219 & 0.2429 & 0.0210 \\
134 & parts: has a fan & 0.2207 & 0.2442 & 0.0234 \\
135 & outside color: is cyan & 0.2201 & 0.2360 & 0.0159 \\
136 & aim: is for engineering & 0.2193 & 0.2436 & 0.0243 \\
137 & appearance: is furry & 0.2178 & 0.2537 & 0.0360 \\
138 & feeling: is easy to be messy & 0.2164 & 0.2344 & 0.0180 \\
139 & function: can give out sound & 0.2164 & 0.2362 & 0.0198 \\
140 & habit: is strong & 0.2161 & 0.2457 & 0.0296 \\
141 & current state: has been processed & 0.2131 & 0.2295 & 0.0163 \\
142 & color: is white & 0.2098 & 0.2268 & 0.0170 \\
143 & habit: is smart & 0.2080 & 0.2375 & 0.0294 \\
144 & medicinal property: is mild & 0.2075 & 0.2204 & 0.0129 \\
145 & parts: has a battery & 0.2059 & 0.2226 & 0.0167 \\
146 & paws: has pads & 0.2054 & 0.2400 & 0.0346 \\
147 & size: is big (compared to apples) & 0.2045 & 0.2187 & 0.0142 \\
148 & tail: has a short tail & 0.2043 & 0.2308 & 0.0265 \\
149 & limb: has long legs & 0.2039 & 0.2295 & 0.0256 \\
150 & aim: is for military & 0.2014 & 0.2174 & 0.0161 \\
151 & function: can show characters & 0.2002 & 0.2194 & 0.0192 \\
152 & color: is black & 0.1982 & 0.2183 & 0.0200 \\
153 & aim: is for safety & 0.1981 & 0.2146 & 0.0164 \\
154 & function: can carry a larger number ($>10$)  of passengers & 0.1978 & 0.2142 & 0.0164 \\
155 & power: consumes special fuel & 0.1973 & 0.2126 & 0.0154 \\
156 & appearance: is hairless & 0.1965 & 0.2085 & 0.0120 \\
157 & habit: is the predator & 0.1952 & 0.2133 & 0.0181 \\
158 & habit: is noisy & 0.1945 & 0.2170 & 0.0225 \\
159 & size: is small (compared to a mobile phone) & 0.1904 & 0.2022 & 0.0118 \\
160 & aim: is for communication & 0.1899 & 0.2065 & 0.0166 \\
161 & diet: eats fish & 0.1897 & 0.2085 & 0.0188 \\
162 & marking: has patches & 0.1891 & 0.2127 & 0.0236 \\
163 & power: is a high-power (>100 w) device & 0.1888 & 0.2039 & 0.0152 \\
164 & current state: is shredded & 0.1834 & 0.1971 & 0.0136 \\
165 & shape: is cubic & 0.1830 & 0.1984 & 0.0153 \\
166 & appearance: has whiskers & 0.1831 & 0.2084 & 0.0254 \\
167 & color: is golden & 0.1790 & 0.1927 & 0.0137 \\
168 & habit: lives in solitary & 0.1779 & 0.1958 & 0.0178 \\
169 & aim: is for rescue & 0.1752 & 0.1912 & 0.0160 \\
170 & speed: moves slowly & 0.1750 & 0.1849 & 0.0099 \\
171 & function: can storage data & 0.1746 & 0.1872 & 0.0126 \\
172 & fitness: fits people with a hat & 0.1747 & 0.1868 & 0.0122 \\
173 & size: is big (compared to pigs) & 0.1731 & 0.1921 & 0.0190 \\
174 & function: can fly & 0.1728 & 0.1847 & 0.0119 \\
175 & wing: has long wings & 0.1727 & 0.1886 & 0.0159 \\
176 & usage scenarios: can be used in the sky & 0.1718 & 0.1847 & 0.0128 \\
177 & behaviour: can climb trees & 0.1716 & 0.1917 & 0.0201 \\
178 & habit: is cold-blooded & 0.1705 & 0.1804 & 0.0099 \\
179 & parts: has only one usb interface & 0.1693 & 0.1793 & 0.0100 \\
180 & habitat: is endangered & 0.1670 & 0.1845 & 0.0176 \\
181 & power: consumes gasoline & 0.1660 & 0.1800 & 0.0140 \\
182 & parts: has a propeller & 0.1659 & 0.1776 & 0.0117 \\
183 & teeth: has buck teeth & 0.1620 & 0.1844 & 0.0224 \\
184 & outside color: is green & 0.1616 & 0.1703 & 0.0087 \\
185 & size: is middle (compared to apples) & 0.1609 & 0.1704 & 0.0095 \\
186 & epidermis: has crust & 0.1596 & 0.1671 & 0.0076 \\
187 & feeling: is gorgeous & 0.1588 & 0.1669 & 0.0081 \\
188 & behaviour: can fly & 0.1577 & 0.1713 & 0.0136 \\
189 & habitat: is domestic & 0.1570 & 0.1726 & 0.0157 \\
190 & habit: moves slow & 0.1562 & 0.1664 & 0.0101 \\
191 & parts: has volume control buttons & 0.1556 & 0.1670 & 0.0114 \\
192 & growth: grows on the ground & 0.1550 & 0.1630 & 0.0080 \\
193 & parts: has three wheels & 0.1549 & 0.1659 & 0.0110 \\
194 & color: is brown & 0.1542 & 0.1713 & 0.0171 \\
195 & marking: has spots & 0.1540 & 0.1620 & 0.0080 \\
196 & parts: has a small screen (\u226410 inches) & 0.1533 & 0.1640 & 0.0107 \\
197 & size: is middle (compared to cars) & 0.1528 & 0.1650 & 0.0122 \\
198 & appearance: has tough skin & 0.1489 & 0.1590 & 0.0101 \\
199 & outside color: is white & 0.1489 & 0.1569 & 0.0080 \\
200 & parts: has armors & 0.1472 & 0.1578 & 0.0106 \\
201 & parts: has a headphone jack & 0.1451 & 0.1551 & 0.0100 \\
202 & feeling: is cool & 0.1442 & 0.1509 & 0.0068 \\
203 & parts: has a cable & 0.1418 & 0.1497 & 0.0079 \\
204 & function: can communicate in a short distance & 0.1417 & 0.1525 & 0.0107 \\
205 & marking: has stripes & 0.1403 & 0.1515 & 0.0112 \\
206 & feeling: is mature & 0.1397 & 0.1461 & 0.0064 \\
207 & habitat: lives in the plains & 0.1395 & 0.1529 & 0.0134 \\
208 & function: can play music & 0.1381 & 0.1482 & 0.0101 \\
209 & habit: can hibernate & 0.1375 & 0.1464 & 0.0089 \\
210 & function: can play videos & 0.1371 & 0.1473 & 0.0102 \\
211 & appearance: has feathers & 0.1346 & 0.1476 & 0.0131 \\
212 & limb: has two legs & 0.1339 & 0.1476 & 0.0137 \\
213 & beak: has a short beak & 0.1333 & 0.1476 & 0.0143 \\
214 & limb: is bipedal & 0.1326 & 0.1476 & 0.0150 \\
215 & habit: is fierce & 0.1326 & 0.1446 & 0.0120 \\
216 & parts: has more than four wheels & 0.1321 & 0.1407 & 0.0086 \\
217 & epidermis: is hairy & 0.1301 & 0.1360 & 0.0059 \\
218 & limb: has two arms & 0.1289 & 0.1413 & 0.0124 \\
219 & behaviour: can lay eggs & 0.1285 & 0.1409 & 0.0123 \\
220 & function: can float & 0.1282 & 0.1359 & 0.0077 \\
221 & function: can heat & 0.1277 & 0.1349 & 0.0072 \\
222 & usage scenarios: can be used in the sea & 0.1277 & 0.1359 & 0.0082 \\
223 & habitat: lives in mountains & 0.1271 & 0.1364 & 0.0093 \\
224 & habitat: lives in the forest & 0.1264 & 0.1363 & 0.0099 \\
225 & parts: has weapons & 0.1254 & 0.1335 & 0.0082 \\
226 & taste: tastes bitter & 0.1211 & 0.1268 & 0.0057 \\
227 & habitat: lives in water & 0.1211 & 0.1267 & 0.0055 \\
228 & habit: lives in nests & 0.1206 & 0.1320 & 0.0114 \\
229 & outside color: is orange & 0.1205 & 0.1259 & 0.0054 \\
230 & outside color: is violet & 0.1193 & 0.1245 & 0.0051 \\
231 & fitness: fits people with moustache & 0.1193 & 0.1235 & 0.0041 \\
232 & size: is small (compared to cars) & 0.1185 & 0.1238 & 0.0053 \\
233 & habitat: lives in the jungle & 0.1183 & 0.1280 & 0.0097 \\
234 & beak: has a needle beak & 0.1176 & 0.1308 & 0.0132 \\
235 & parts: has only one network interface & 0.1161 & 0.1233 & 0.0072 \\
236 & diet: eats plankton & 0.1144 & 0.1200 & 0.0055 \\
237 & color: is yellow & 0.1140 & 0.1242 & 0.0101 \\
238 & color: is brown & 0.1119 & 0.1166 & 0.0048 \\
239 & usage scenarios: can be used in the river & 0.1114 & 0.1182 & 0.0068 \\
240 & behaviour: can fish & 0.1105 & 0.1175 & 0.0070 \\
241 & fitness: fits people with beard & 0.1095 & 0.1139 & 0.0044 \\
242 & power: consumes electricity & 0.1092 & 0.1137 & 0.0045 \\
243 & aim: is for sports & 0.0998 & 0.1042 & 0.0044 \\
244 & color: is white & 0.0997 & 0.1071 & 0.0074 \\
245 & habitat: lives in trees & 0.0996 & 0.1089 & 0.0093 \\
246 & marking: has patches & 0.0985 & 0.1026 & 0.0041 \\
247 & size: is middle (compared to a mobile phone) & 0.0980 & 0.1028 & 0.0048 \\
248 & parts: has a jet engine & 0.0962 & 0.1017 & 0.0055 \\
249 & color: is black & 0.0961 & 0.0998 & 0.0037 \\
250 & function: can communicate in a long distance & 0.0960 & 0.1023 & 0.0063 \\
251 & outside color: is brown & 0.0956 & 0.0985 & 0.0029 \\
252 & edibility: is rare & 0.0952 & 0.0989 & 0.0037 \\
253 & usage scenarios: can be used in the lake & 0.0927 & 0.0980 & 0.0053 \\
254 & parts: has a large screen ($>10$ inches) & 0.0903 & 0.0959 & 0.0056 \\
255 & habitat: lives in the ocean & 0.0903 & 0.0943 & 0.0040 \\
256 & habitat: lives in the traveling & 0.0889 & 0.0944 & 0.0055 \\
257 & color: is red & 0.0872 & 0.0914 & 0.0042 \\
258 & function: can photograph & 0.0864 & 0.0919 & 0.0055 \\
259 & feeling: is naive & 0.0855 & 0.0886 & 0.0031 \\
260 & parts: has two wheels & 0.0840 & 0.0877 & 0.0037 \\
261 & tail: has a colorful tail & 0.0839 & 0.0887 & 0.0048 \\
262 & parts: has many usb interfaces & 0.0838 & 0.0888 & 0.0050 \\
263 & safety: is dangerous & 0.0836 & 0.0864 & 0.0028 \\
264 & parts: has a camera & 0.0831 & 0.0885 & 0.0053 \\
265 & marking: has stripes & 0.0827 & 0.0861 & 0.0035 \\
266 & taste: tastes puckery & 0.0824 & 0.0851 & 0.0027 \\
267 & aim: is for cleaning & 0.0808 & 0.0844 & 0.0036 \\
268 & epidermis: is angular & 0.0775 & 0.0800 & 0.0025 \\
269 & outside color: is transparent & 0.0769 & 0.0804 & 0.0035 \\
270 & epidermis: is barbed & 0.0737 & 0.0764 & 0.0027 \\
271 & shape: is schistose & 0.0731 & 0.0758 & 0.0027 \\
272 & neck: has a short neck & 0.0731 & 0.0780 & 0.0049 \\
273 & outside color: is black & 0.0730 & 0.0759 & 0.0029 \\
274 & marking: has spots & 0.0727 & 0.0756 & 0.0029 \\
275 & color: is blue & 0.0685 & 0.0718 & 0.0033 \\
276 & other body parts: has fins & 0.0644 & 0.0681 & 0.0037 \\
277 & material: is made of cloth & 0.0640 & 0.0664 & 0.0024 \\
278 & parts: has a track & 0.0630 & 0.0654 & 0.0024 \\
279 & usage scenarios: can be used on the track & 0.0628 & 0.0654 & 0.0027 \\
280 & current state: is shelled & 0.0624 & 0.0645 & 0.0021 \\
281 & smell: is smelly & 0.0621 & 0.0649 & 0.0028 \\
282 & paws: has hooves & 0.0618 & 0.0657 & 0.0039 \\
283 & appearance: has shells & 0.0618 & 0.0648 & 0.0030 \\
284 & function: can be located & 0.0613 & 0.0654 & 0.0041 \\
285 & color: is yellow & 0.0607 & 0.0639 & 0.0032 \\
286 & color: is transparent & 0.0602 & 0.0630 & 0.0028 \\
287 & color: is green & 0.0598 & 0.0626 & 0.0028 \\
288 & habit: is venomous & 0.0597 & 0.0617 & 0.0020 \\
289 & parts: has many network interfaces & 0.0577 & 0.0603 & 0.0026 \\
290 & limb: has tentacle & 0.0570 & 0.0594 & 0.0024 \\
291 & shape: is cubic & 0.0566 & 0.0591 & 0.0024 \\
292 & parts: has a mast & 0.0558 & 0.0586 & 0.0028 \\
293 & parts: has a wrist band & 0.0550 & 0.0567 & 0.0017 \\
294 & beak: has a curved beak & 0.0531 & 0.0569 & 0.0038 \\
295 & function: can refrigerate & 0.0530 & 0.0551 & 0.0022 \\
296 & habit: is nocturnal & 0.0525 & 0.0558 & 0.0034 \\
297 & shape: is cylindrical & 0.0511 & 0.0526 & 0.0015 \\
298 & other body parts: has a shell & 0.0506 & 0.0532 & 0.0026 \\
299 & aim: is for cleaning & 0.0506 & 0.0528 & 0.0022 \\
300 & shape: is ellipsoidal & 0.0501 & 0.0518 & 0.0017 \\
301 & power: consumes wind power & 0.0486 & 0.0502 & 0.0016 \\
302 & parts: has an antenna & 0.0483 & 0.0503 & 0.0019 \\
303 & beak: has a hooked beak & 0.0463 & 0.0498 & 0.0035 \\
304 & size: is middle (compared to pigs) & 0.0429 & 0.0455 & 0.0025 \\
305 & horn: has short horns & 0.0416 & 0.0439 & 0.0022 \\
306 & horn: has cuspidal horns & 0.0415 & 0.0437 & 0.0022 \\
307 & parts: has a sunroof & 0.0375 & 0.0389 & 0.0014 \\
308 & appearance: is mucous & 0.0364 & 0.0378 & 0.0014 \\
309 & color: is brown & 0.0358 & 0.0369 & 0.0010 \\
310 & color: is yellow & 0.0357 & 0.0369 & 0.0012 \\
311 & shape: is star-shaped & 0.0347 & 0.0358 & 0.0011 \\
312 & parts: has a touch pad & 0.0344 & 0.0370 & 0.0026 \\
313 & material: is made of wood & 0.0316 & 0.0329 & 0.0013 \\
314 & color: is red & 0.0307 & 0.0319 & 0.0012 \\
315 & horn: has long horns & 0.0303 & 0.0320 & 0.0017 \\
316 & function: can dive & 0.0300 & 0.0313 & 0.0013 \\
317 & color: is green & 0.0287 & 0.0305 & 0.0019 \\
318 & color: is green & 0.0282 & 0.0291 & 0.0009 \\
319 & outside color: is blue & 0.0276 & 0.0284 & 0.0008 \\
320 & outside color: is gray & 0.0275 & 0.0284 & 0.0009 \\
321 & appearance: is acicular & 0.0266 & 0.0280 & 0.0014 \\
322 & power: consumes manpower & 0.0262 & 0.0271 & 0.0008 \\
323 & price: is cheap & 0.0260 & 0.0269 & 0.0009 \\
324 & parts: has a sail & 0.0255 & 0.0267 & 0.0012 \\
325 & teeth: has tusks & 0.0169 & 0.0178 & 0.0009 \\
326 & wing: has short wings & 0.0166 & 0.0172 & 0.0007 \\
327 & horn: has curved horns & 0.0142 & 0.0150 & 0.0008 \\
328 & usage scenarios: can be used in the space & 0.0127 & 0.0131 & 0.0004 \\
329 & habitat: lives in the cave & 0.0107 & 0.0111 & 0.0004 \\
330 & habit: is the scavenger & 0.0107 & 0.0111 & 0.0004 \\
331 & beak: has a long beak & 0.0104 & 0.0109 & 0.0005 \\
332 & appearance: has scales & 0.0064 & 0.0066 & 0.0002 \\
333 & color: is blue & 0.0026 & 0.0028 & 0.0002 \\
334 & color: is red & 0.0020 & 0.0021 & 0.0001 \\
335 & color: is orange & 0.0015 & 0.0016 & 0.0001 \\
336 & current state: is peeled & 0.0012 & 0.0013 & 0.0001 \\
337 & color: is orange & 0.0012 & 0.0012 & 0.0001 \\
338 & color: is blue & 0.0012 & 0.0012 & 0.0001 \\
339 & color: is brown & 0.0011 & 0.0011 & 0.0001 \\
340 & color: is cyan & 0.0007 & 0.0008 & 0.0000 \\
341 & color: is violet & 0.0006 & 0.0007 & 0.0000 \\
342 & shape: is cubic & 0.0006 & 0.0007 & 0.0000 \\
343 & habitat: lives in fields & 0.0006 & 0.0007 & 0.0000 \\
344 & color: is violet & 0.0005 & 0.0006 & 0.0000 \\
345 & usage mode: is head-mounted & 0.0005 & 0.0006 & 0.0000 \\
346 & color: is cyan & 0.0005 & 0.0005 & 0.0000 \\
347 & color: is orange & 0.0005 & 0.0005 & 0.0000 \\
348 & color: is violet & 0.0005 & 0.0005 & 0.0000 \\
349 & color: is cyan & 0.0004 & 0.0004 & 0.0000 \\
350 & habitat: lives in the bush & 0.0003 & 0.0003 & 0.0000 \\
351 & existence: is rare & 0.0003 & 0.0003 & 0.0000 \\
352 & habitat: lives in the arctic & 0.0002 & 0.0003 & 0.0000 \\
353 & horn: has only one horn & 0.0002 & 0.0002 & 0.0000 \\
354 & shape: is globular & 0.0001 & 0.0001 & 0.0000 \\
355 & beak: has a dagger beak & 0.0000 & 0.0000 & 0.0000 \\
356 & smell: is smelly & 0.0000 & 0.0000 & 0.0000 \\
357 & habitat: lives in coastal places & 0.0000 & 0.0000 & 0.0000 \\
358 & habitat: lives in the desert & 0.0000 & 0.0000 & 0.0000 \\
359 & taste: tastes spicy & 0.0000 & 0.0000 & 0.0000 \\
\end{longtable}

\end{document}